\definecolor{Gray}{gray}{0.9}
\title{Stochastic Fairness and Language-Theoretic Fairness in Planning on Nondeterministic Domains}
 \author{
 Benjamin Aminof\\
 TU Wien, Austria\\
 {aminof@forsyte.at}
 \And
 Giuseppe De Giacomo\\
 Univ.\ Roma ``La Sapienza'', Italy\\
 {degiacomo@diag.uniroma1.it}
 \And
 Sasha Rubin\\
 Univ. Sydney, Australia\\
 {sasha.rubin@sydney.edu.au}
 }
\newcommand{\cut}[1]{}
\newcommand{\A}{\mathcal{A}} 
\newcommand{\C}{\mathcal{C}} 
 \newcommand{\F}{\mathcal{F}}
 \newcommand{\J}{\mathcal{J}}
 \newcommand{\R}{\mathcal{R}}
\newcommand{\U}{\mathcal{U}}
\newcommand{\tup}[1]{\langle #1\rangle}            
\newcommand{\Math}[1]{\ensuremath{#1}}
\newcommand{\Nat}{\Math{\mathbb{N}}}
\newcommand{\true}{\mathtt{true}}
\newcommand{\false}{\mathtt{false}}
\newcommand{\mnext}{\bigcirc}		
\newcommand{\malways}{\Box}		
\newcommand{\meventually}{\Diamond}	
\newcommand{\muntil}{\mathop{\U}}	
\long\def\eatpar#1{%
\ifx#1\par                      
\let\nextmove=\eatpar           
\else
\let\nextmove=#1
\fi
\nextmove
}
\def\qed{\hfill{\qedboxempty}      
  \ifdim\lastskip<\medskipamount \removelastskip\penalty55\medskip\fi}
\def\qedboxempty{\vbox{\hrule\hbox{\vrule\kern3pt
                 \vbox{\kern3pt\kern3pt}\kern3pt\vrule}\hrule}}
\def\qedfull{\hfill{\qedboxfull}   
  \ifdim\lastskip<\medskipamount \removelastskip\penalty55\medskip\fi}
\def\qedboxfull{\vrule height 4pt width 4pt depth 0pt}
\newcommand{\Init}{\modeit{Init}}
\newcounter{bean}
\newenvironment{tightenumerate}{
                \begin{list}{
                  {\mbox {
                      \arabic{bean}.\/}}}{\usecounter{bean}
                      \setlength{\itemsep}{-1pt}\setlength{\topsep}{0pt}}}{
                \end{list}}
\newenvironment{tightitemize}{
                \begin{list}{$\bullet$}{
                    \setlength{\itemsep}{-1pt}}{\setlength{\topsep}{0pt}}}{
                \end{list}}
\newcommand{\under}[1]{\mbox{\underline{\it\smash{#1}\vphantom{\lower.05ex\hbox{
x}}}}}
\newcommand{\commentarea}[1]{}
\newcommand{\limp}{\supset}
\newcommand{\Next}{\raisebox{-0.27ex}{\LARGE$\circ$}}
\newcommand{\LTL}{{\sc ltl}\xspace}
\newcommand{\LTLf}{{\sc ltl}$_f$\xspace}
\renewcommand{\Nat}{{\rm I\kern-.23em N}}
\newcommand{\np}{{\sc np}\xspace}
\newcommand{\exptime}{{\sc exptime}\xspace}
\newcommand{\nexptime}{{\sc nexptime}\xspace}
\newcommand{\expspace}{{\sc expspace}\xspace}
\newsavebox{\fmbox}
\newtheorem{theorem}{Theorem}
\newtheorem{proposition}{Proposition}
\newtheorem{lemma}{Lemma}
\newtheorem{remark}{Remark}
\newtheorem{definitionAux}{Definition}
\newtheorem{claimAux}{Claim}
\newtheorem{exampleAux}{Example}
\newenvironment{example}{\begin{exampleAux}\rm}{\end{exampleAux}}
\newtheorem{examplesAux}{Examples}
\newtheorem{constructionAux}{Construction}
\newenvironment{proof}{\noindent \textsl{Proof.\ }}{\qedfull}
\long\def\eatpar#1{%
\ifx#1\par                      
\let\nextmove=\eatpar           
\else
\let\nextmove=#1
\fi
\nextmove
}
\def\qed{\hfill{\qedboxempty}      
  \ifdim\lastskip<\medskipamount \removelastskip\penalty55\medskip\fi}
\def\qedboxempty{\vbox{\hrule\hbox{\vrule\kern3pt
                 \vbox{\kern3pt\kern3pt}\kern3pt\vrule}\hrule}}
\def\qedfull{\hfill{\qedboxfull}   
  \ifdim\lastskip<\medskipamount \removelastskip\penalty55\medskip\fi}
\def\qedboxfull{\vrule height 4pt width 4pt depth 0pt}
\newcommand{{\incolumn}}[1]{\begin{tabular}[c]{c} #1 \end{tabular}}
\newcommand{{\incolumnmath}}[1]{\begin{array}[c]{c} #1 \end{array}}
\newcommand{\goal}{target\xspace}
\begin{document}

\maketitle

\begin{abstract}
We address two central notions of fairness in the literature of planning on nondeterministic fully observable domains. The first, which we call stochastic fairness, is classical, and assumes an environment which operates probabilistically using possibly unknown probabilities. The second, which is language-theoretic, assumes that if an action is taken from a given state infinitely often then all its possible outcomes should appear infinitely often (we call this state-action fairness). While the two notions coincide for standard reachability goals, they diverge for temporally extended goals. This important difference has been overlooked in the planning literature, and we argue has led to confusion in a number of published algorithms which use reductions that were stated for state-action fairness, for which they are incorrect, while being correct for  stochastic fairness. We remedy this and provide an optimal sound and complete algorithm for solving state-action fair planning for LTL/LTLf goals, as well as a correct proof of the lower bound of the goal-complexity (our proof is general enough that it provides new proofs also for the no-fairness and stochastic-fairness cases).  Overall, we show that stochastic fairness is better behaved than state-action fairness. 
\end{abstract}

\section{Introduction} \label{sec:introduction}

Nondeterminism in planning captures uncertainty that the agent has at planning time about the effects of its actions. For instance, ``remove block A from the table'' may either succeed, resulting in ``block A is not on the table'', or fail, resulting in ``block A is on the table''. Plans in nondeterministic environments are not simply sequences of actions as in classical planning; rather, the next action may depend on the sequences of actions (and observations\footnote{In this paper we assume there is no uncertainty about the current state of the system, i.e., environments are fully observable.}) so far, and are captured by policies (also known as strategies and controllers).

Broadly speaking, nondeterminism manifests in one of two ways, stochastic- and adversarial-environments.

\subsubsection{Stochastic environments}
Nondeterministic environments with probabilities are often modeled as Markov Decision Processes (MDPs) in planning. These are state-transition systems in which the probability of an effect depends only on the current state and action. 
However, sometimes the probabilities of action effects are not available, or are non stationary,  or are hard to estimate, e.g., a robot may encounter an unexpected obstacle, or an exogenous event or failure occurs. A long thread in this setting aims to understand what it means to plan in such an environment~\cite{DaTV99,PistoreT01,Cimatti03,Ghallab:CUP2016,DIppolitoRS18}. One common intuition is that the goal should be achievable by trial-and-error while expecting only a finite amount of bad luck~\cite{Cimatti03}, e.g., a policy that repeats the action ``remove block A from the table'' would eventually succeed under this assumption. This amounts to assuming that some unknown distribution assigns a non-zero probability to each of the alternative effects.{\footnote{Although reinforcement-learning also makes a similar assumption, it is out of the scope of this work which focuses on nondeterminism in model-based control and in planning in particular.}}
Thus, although there are no explicit probabilities, the stochastic principle is still in place, and we call such assumptions \emph{stochastic fairness}.
Plans in such a setting are called strong-cyclic, and their importance is evidenced by the fact that there are several tools for finding strong-cyclic policies, e.g., NDP \cite{AlfordKNG14}, FIP \cite{FuJNBY16}, myND \cite{DMattmullerOHB10}, Gamer \cite{KissmannE11}, PRP \cite{MuiseMB12}, GRENADE \cite{RamirezS14}, and FOND-SAT \cite{GeffnerG18}. Such policies also correspond to ensuring that the goal holds with probability one~\cite{Ghallab:CUP2016,GeBo13}.
\subsubsection{Adversarial environments}
Nondeterministic environments without probabilities are often modeled as fully observable nondeterministic planning domains (FOND). 
These are state-transition systems in which the effect of an action is a set of possible states, rather than a single state as in classical planning. Policies that guarantee success, i.e., the goal is achieved no matter how the nondeterminism is resolved, are called strong solutions. When handling adversarial nondeterminism it is often reasonable to require that a policy should guarantee success under some additional assumptions about the environment. For instance, a typical assumption is that repeating an action in a given state results in all possible effects, e.g., repeating the action ``remove block A from the table" would eventually succeed (as well as eventually fail). Note that this can be expressed as a property of traces, and so for the purpose of this paper, we call such notions \emph{language-theoretic fairness}. We focus on one central such notion which we call \emph{state-action fairness} and which says, of a trace, that if an action $a$ is taken from a state $s$ infinitely often in the trace, and if $s'$ is a possible effect of $a$ from $s$, then infinitely often in the trace $s'$ is the resulting effect of action $a$ from state $s$. Although there are many notions of fairness, this particular notion has been identified as providing sufficient assumptions that guarantee the success of solutions that repeatedly retry; see \cite{DIppolitoRS18} where the notion is called \emph{state strong fairness}.

\subsubsection{What is the relationship between fairness in an adversarial setting and fairness in a stochastic setting?}
On the one hand, the two notions of fairness are similar. Indeed, planning assuming either notion of fairness means that the policy can ignore some traces, which are guaranteed not to be produced by the environment.\footnote{In the language-theoretic setting, the policy need not succeed on traces that do not satisfy the fairness property; while in the stochastic setting the policy need not succedd on any set of traces whose probability measure is zero.} Also, it turns out that when planning for reachability goals (i.e., eventually reach a certain target set of states) the two notions of fairness are interchangeable. More precisely, a policy achieves the reachability goal assuming stochastic fairness (i.e., it is a strong-cyclic solution) if and only if it achieves the reachability goal assuming state-action fairness (i.e., the target set is reached on all state-action fair traces).
On the other hand, it turns out that the two notions of fairness are not generally interchangeable for planning for temporally extended goals (such as those expressed in linear temporal logic \LTL or its finite-trace variant \LTLf). It is the purpose of this paper to clarify this fact and study its consequences.

\subsubsection{Outline of the paper and contributions}
In Section~\ref{sec:fairness-comparison} we point out the distinction between stochastic fairness and state-action fairness in the context of planning. Once this distinction has been noted, one realizes that there are algorithms (published in IJCAI) for fair planning for temporally-extended goals that, although stated for state-action fairness, are actually correct for stochastic fairness (but do not address state-action fairness at all). The relevant parts of these algorithms are discussed in Section~\ref{sec:confusion}.
To remedy this, the focus of the rest of the paper is on algorithms and the computational complexity of planning for temporally-extended goals assuming state-action fairness.  

In Section~\ref{sec:algorithms-and-upper-bounds} we provide a new algorithm for this problem that does not conflate the two notions. We go on to show that the complexity in the goal is in $2$\exptime, while the complexity in the domain is in $1$\nexptime.  
 
In Section~\ref{sec:lower-bounds} we provide a proof of the matching $2$\exptime lower-bound for the goal-complexity. We also discuss the domain-complexity: it is $1$\exptime-hard already for reachability goals, leaving a gap between deterministic and nondeterministic exponential time.
We also show that our lower bound is proved using a technique that is general enough to give new proofs of the $2$\exptime-hardness for the goal complexity also for the no-fairness and stochastic-fairness cases.
   
\section{Fair Planning Problems} \label{sec:framework}
In this section we define planning domains, temporally extended goals, and isolate the two notions of fairness.

\subsubsection{Planning Domains}
A \emph{nondeterministic planning domain} is a tuple $(St,Act,s_0,Tr)$ where $St$ is a finite set of \emph{states}, $Act$ is a finite set of \emph{actions}, $s_0$ is an \emph{initial state}, and $Tr \subseteq St \times Act \times St$ is a \emph{transition relation}. We will sometimes write $Tr$ in functional form, i.e., $Tr(s,a) \subseteq St$. We say that the action $a$ is \emph{applicable} in state $s$ if $Tr(s,a) \neq \emptyset$. We assume, by adding a dummy action and state if needed, that for every state there is an applicable action.

For a finite set $X$ let $Dbn(X)$ denote the set of \emph{(probability) distributions} over $X$, i.e., functions $d:X \to [0,1]$ such that $\sum_{x \in X} d(x) = 1$. An element $x$ is in the \emph{support} of $d$ if $d(x) > 0$.
A \emph{stochastic planning domain} is a tuple $(D,Pr)$ where $D = (St,Act,s_0,Tr)$ is called the \emph{induced} nondeterministic planning domain, and $Pr$, called the \emph{probabilistic transition function}, is a partial function $Pr:St \times Act \to Dbn(St)$ defined only for pairs $(s,a)$ where $a$ is applicable in $s$, satisfying that the support of $Pr(s,a)$ is equal to $Tr(s,a)$. Note that stochastic domains are variants of Markov Decision Processes (MDPs). However, MDPs typically have Markovian rewards, while stochastic planning problems may have goals that depend on the history.

We will refer to both nondeterministic and stochastic planning domains simply as \emph{domains}. Unless otherwise stated, \emph{domains are compactly represented}, e.g., in variants of the Planning Domain Description Language (PDDL), and thus can usually be represented with a number of bits which is poly-logarithmic in the number of states and actions. In particular, the states are encoded as assignments to Boolean variables $\F$ called \emph{fluents}, thus we have that $St = 2^{\F}$. For symmetry, also the actions are encoded as assignments to Boolean variables $\A$ that are disjoint from $\F$, thus we have that $Act = 2^{\A}$. Although the literature also contains formalisms for compactly representing stochastic domains (such as Probabilistic PDDL),
here we will not be concerned with a detailed formalization of probabilistic transition functions since it is known (as we will later discuss) that probabilities essentially play no role in the stochastic-fair planning problem (formally defined below).

\subsubsection{Traces and Policies}
Let $D$ be a domain.
A \emph{trace} $\tau$ of $D$ is a finite or infinite sequence $(s_0 \cup a_0)(s_1 \cup a_1)\cdots$ over the alphabet $(St \cup Act) = 2^{\F \cup \A}$ where $s_0$ is the initial state, and $(s_{i-1},a_{i-1},s_i) \in Tr$ for all $i$ with $1 \leq i < |\tau|$. 
Moreover, the sequence $s_0 s_1 \cdots$ of states is called the \emph{path} induced by $\tau$. 
A \emph{policy} is a function $f:(St)^+ \rightarrow Act$ such that for every $u \in (St)^+$ the action $f(u)$ is applicable in the last state of $u$. 
Note that policies are history dependent in this paper.
A trace $\tau$ is \emph{generated by $f$}, or simply called an \emph{$f$-trace}, if for every finite prefix $(s_0 \cup a_0) \cdots (s_i \cup a_i)$ of $\tau$ we have that $f(s_0 s_1 \cdots s_i) = a_i$. 

A \emph{finite-state representation} of a policy $f$ is a finite-state input/output automaton that, on reading $u \in (St)^+$ as input, outputs the action $f(u)$. A \emph{finite-state policy} is one having a finite-state representation. 

A stochastic domain $D$ combined with a policy $f$ induces a (possibly infinite-state) Markov chain, denoted $(D,f)$, in the usual way, which gives rise to a probability distribution over the set of infinite $f$-traces in $D$~\cite{Vardi:FOCS85}. 

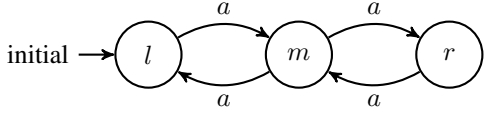
\begin{figure}
	\begin{center}
		\scalebox{1}{
\begin{tikzpicture}[node distance=2cm, auto,thick,>=stealth',initial text = {initial}]
\node[state,initial]  (sl) 			{$l$};
\node[state] 		(sm) [right of=sl] 	{$m$};
\node[state]		(sr) [right of=sm] 	{$r$};

\path[->] (sl) edge[bend left] node {$a$} (sm)
(sm) edge[bend left] node {$a$} (sl)
(sm) edge[bend left] node {$a$} (sr)
(sr) edge[bend left] node {$a$} (sm);            
\end{tikzpicture}
}
	\end{center}
	\caption{A drawing of the domain $D$ from Example~\ref{cex}, used in counterexamples. States are labeled by fluents, and there is a single applicable action.} \label{fig:fail-fairness}
\end{figure}

The following domain, illustrated in Figure~\ref{cex}, will be used in counterexamples. 
\begin{example} \label{cex}
	Define the domain $D = (St,Act,s_0,Tr)$ where $St = 2^\F$ with $\F = \{l,m,r\}$, $Act = 2^\A$ with $\A=\{a\}$, $s_0 = \{l\}$, and $Tr$ consist of the triples $(\{l\},\{a\},\{m\}),  (\{m\},\{a\},\{l\}), (\{m\},\{a\},\{r\})$ and $(\{r\},\{a\},\{m\})$. Note that the only applicable action (from any state) is $\{a\}$, only three states are reachable from the initial state using this action (i.e., $\{l\},\{m\}$ and $\{r\}$), and there is only one policy available (it always does the action $\{a\}$).
	Define the trace  $\tau$ as $(\{l,\mathrm{a}\} \{m,\mathrm{a}\} \{r,\mathrm{a}\} \{m,\mathrm{a}\})^\omega$.\footnote{For a finite string $u$, we write $u^\omega$ for the infinite string $u u u \dots$.} Note that this trace takes each of the transitions $m \xrightarrow{a} r$ and $m \xrightarrow{a} l$ infinitely often.
\end{example}

\subsubsection{Linear Temporal Logic}
Linear Temporal Logic (\LTL) is a formalism that was introduced into the verification literature for describing computations of programs without the use of explicit time-stamps~\cite{Pnueli77}. 
The logic has since been used in planning as a language for specifying temporally extended goals and for expressing search control, see, e.g., \cite{Fainekos:ICRA05,BacchusK00}. 

The syntax of \LTL consists of atoms $AP$, and is closed under the Boolean operations $\lnot$ and $\land$, and the temporal operators $\mnext$ (read ``next'') and $\muntil$ (read ``until''):
\[\psi ::= p \mid (\lnot \psi) \mid (\psi_1\land
\psi_2) \mid (\mnext\psi) \mid (\psi_1\muntil\psi_2)\]
with $p$ varying over the elements of $AP$.

We use the usual short-hands, e.g., $\false := p \land \lnot p$, $\psi_1 \limp \psi_2 := \lnot \psi_1 \lor \psi_2$, $\Diamond \psi  := \true \muntil \psi$ (read ``eventually $\psi$''), and $\Box \psi  := \lnot \Diamond \lnot \psi$ (read ``always $\psi$''). 

Formulas of \LTL are interpreted over infinite sequences $\tau = \tau_0 \tau_1 \cdots$ over the alphabet $2^{AP}$. Define $\tau,j \models \psi$ inductively on the structure of $\psi$, simultaneously for all time points $j \geq 0$, as follows:
\begin{itemize}
\item $\tau,j \models p$ if $p \in \tau_j$,
\item $\tau,j \models \psi_1 \land \psi_2$ if $\tau,j \models \psi_i$ for $i =1,2$,
\item $\tau,j \models \mnext \psi$ if $\tau,j+1 \models \psi$,
\item $\tau,j \models \psi_1 \muntil \psi_2$ if $\tau,k \models \psi_2$ for some $k \geq j$, and $\tau,i \models \psi_1$ for all $i \in [j,k)$.
\end{itemize}
 
We also consider the variant \LTLf of \LTL interpreted over finite sequences. It has the same syntax and semantics as \LTL except that $\tau$ is a finite sequence and that one defines $\mnext$ as follows, cf.~\cite{BacchusK00,BaierM06,DegVa13}: 
\begin{itemize} 
\item $\tau,j \models \mnext \psi$ if $j + 1 \leq last(\tau)$ and $\tau,j+1 \models \psi$ where $last(\tau)$ is the last position of $\tau$, i.e., $last(\tau) = |\tau| - 1$ since sequences start with position $0$.
\end{itemize}

If $\psi$ is an \LTL (resp. \LTLf) formula and $\tau$ is an infinite (resp. finite) sequence over $AP$, we write $\tau \models \psi$, and say that $\tau$ \emph{satisfies} $\psi$, iff $\tau,0 \models \psi$. 

We also make the following useful convention that allows us to interpret \LTLf formulas over infinite traces: if $\tau$ is infinite and $\psi$ is an \LTLf formula, then $\tau \models \psi$ is defined to mean that some finite prefix of $\tau$ satisfies $\psi$.

In the context of a planning domain $D$, we will take $AP$ to be $\F \cup \A$ (this is for convenience; some papers take $AP = \F$). We write $(D,f) \models A \psi$, and say that $f$ \emph{enforces} $\psi$, if every {infinite} $f$-trace of $D$ satisfies $\psi$.

\subsubsection{Planning Problems}
A \emph{goal} $G$ is a set of infinite traces of $D$. A \emph{planning problem} $\tup{D,G}$ consists of a domain $D$ and a goal $G$. \emph{Solving} the planning problem is to decide, given $D$ (compactly represented) and $G$ (suitably represented), if there is a policy $f$ such that every infinite $f$-trace satisfies $G$ (i.e., is in $G$). In this paper, goals will typically be represented by \LTL/\LTLf formulas.

\subsubsection{Fair Planning Problems}
We now define the two types of fair planning problems mentioned in the introduction.

A trace $\tau$ of a domain $D$ is \emph{state-action fair} if for every transition $(s,a,s')$ of $D$, if $s,a$ occurs infinitely often in $\tau$ then $s,a,s'$ occurs infinitely often in $\tau$. This can be expressed by the following \LTL formula :
\[
\phi_{D,fair} := \bigwedge_{(s,a,s') \in Tr} \left(\malways \meventually (s \wedge a) \limp \malways \meventually (s \wedge a \wedge \mnext s'))\right).
\]

A policy $f$ {solves} the \emph{state-action-fair planning problem} $\tup{D,\psi}$ if every state-action-fair $f$-trace satisfies $\psi$, written $(D,f) \models A^\text{sa-fair} \psi$. 

For a stochastic domain $D$, we write $(D,f) \models A^{=1} \psi$ to mean that the probability that an $f$-trace satisfies $\psi$ is equal to $1$, and we say that $f$ \emph{almost surely enforces} $\psi$.
It is known that
$(D,f) \models A^{=1} \psi$ does not depend on the probabilistic transition function of $D$, but only on its induced nondeterministic domain; indeed, it does not depend on the exact distributions $Pr(s,a)$ but only on their supports, which are specified by the transition relation $Tr$ of the induced nondeterministic domain, cf.~\cite{Vardi:LICS86}. Hence, we can actually extend this probabilistic notion of enforcing also to nondeterministic domains, as follows. For a nondeterministic domain $D$, we write $(D,f) \models A^{=1} \psi$ to mean that $(D',f) \models A^{=1} \psi$ where $D'$ is any stochastic domain whose induced nondeterministic domain is $D$. Thus, for a domain $D$ (nondeterministic or stochastic), we say that a policy $f$ solves the \emph{stochastic-fair planning problem} $\tup{D,\psi}$ if $(D,f) \models A^{=1} \psi$.

\subsubsection{Connection with Planning for Reachability Goals}
The classic goal in planning is reachability, typically represented as a Boolean combination $\goal$ of fluents, i.e., it can be expressed by an \LTL/\LTLf formula $\meventually \goal$. A policy $f$ enforcing $\meventually \goal$ is known as a \emph{strong solution}~\cite{Cimatti03} or an \emph{acyclic safe solution}~\cite{Ghallab:CUP2016}). A policy enforcing $\meventually \goal$ assuming state-action fairness is known as a \emph{strong cyclic solution}~\cite{Cimatti03} or a \emph{cyclic safe solution}~\cite{Ghallab:CUP2016}).

\subsubsection{Computational Complexity}
Planning problems have two inputs: the domain (represented compactly) and the goal (typically represented as a formula).
\emph{Combined complexity} measures the complexity in terms of the size of both inputs, while \emph{goal complexity} (resp. \emph{domain complexity}) only measures the complexity in the size of the goal (resp. {domain}). Formally, we say that the \emph{goal complexity is in a complexity class $\C$} if for every domain $D$, the complexity of the problem that takes as input a goal $\psi$ and decides if there is a solution to the planning problem $\tup{D,\psi}$, is in $\C$; and we say that the \emph{goal complexity is hard for $\C$} if there is a domain $D$ such that the complexity of the problem that takes as input a goal $\psi$ and decides if there is a solution to the planning problem $\tup{D,\psi}$ is $\C$-hard. Similar definitions hold for domain complexity. Such measures were first introduced in database theory~\cite{Vardi:STOC1982}.

\subsubsection{Automata-theoretic approach to planning} A typical approach for solving planning problems with temporally-extended goals is to use an automata-theoretic approach. Here we recall just enough for our needs in Sections~\ref{sec:confusion} and~\ref{sec:algorithms-and-upper-bounds}. 

A \emph{deterministic automaton} is a tuple $M = (\Sigma,Q,q_0,\delta,\C)$ where $\Sigma$ is the \emph{input alphabet}, $Q$ is a finite set of \emph{states}, $q_0 \in Q$ is the \emph{initial state}, $\delta:Q \times \Sigma \to Q$ is the \emph{transition function}, and $\C$ is the \emph{acceptance condition} (described later). A (finite or infinite) input word $u = u_0 u_1 \cdots$ determines a run, i.e., the sequence $q_0 q_1 \cdots$ of states starting with the initial state and respecting the transition function, i.e., $\delta(q_{i-1},u_{i-1}) = q_i$ for all $1 \leq i < |u|$. A word is \emph{accepted} by $M$ if its run satisfies the acceptance condition $\C$. There are a variety of different ways to define the acceptance condition. If $M$ is to accept only finite words, then we typically have $\C \subseteq Q$; and we say that a finite run satisfies $\C$ if its last state is in $\C$. Such an automaton is called a \emph{deterministic finite word automaton (DFW)}. If $M$ is to accept only infinite words, then there are a number choices for $\C$. We will not be concerned with the specific choice until Section~\ref{sec:algorithms-and-upper-bounds}.

The \emph{synchronous product} of a domain $D$ and a deterministic automaton $A$ over the input alphabet $2^{\F \cup \A}$ is a
domain, denoted $D \times A$, whose states are pairs $(d,q)$ where $d$ is a state of $D$ and $q$ is a state of $A$, and that can transition from state $(d,q) $ to state $(d',q')$ on action $a$ if $(d,a,d') \in Tr$ and the automaton can go from $q$ reading $d \cup a$ to $q'$. Intuitively, $D \times A$ simulates both $D$ and $A$ simultaneously. Such products are used in algorithms for planning with \LTL/\LTLf goals in Section~\ref{sec:confusion} and Section~\ref{sec:algorithms-and-upper-bounds}. We remark that the product is sometimes also compactly represented, although the details depend on the context and will not concern us.

\section{Stochastic Fairness $\not \equiv$ State-action Fairness} \label{sec:fairness-comparison}

In this section we compare the two notions of fairness in the context of planning. It turns out that they are equivalent for reachability goals, but not for general \LTL/\LTLf goals. The first principle is known, e.g.~\cite{Rintanen:ICAPS04,Ghallab:CUP2016}, and is repeated here for completeness.

\begin{proposition} \label{prop:equivalent for reachability goals} 
Let $D$ be a (nondeterministic or stochastic) domain and let $target$ be a Boolean combination of fluents. The following are equivalent for every finite-state policy $f$:
\begin{enumerate}
 \item $(D,f) \models A^\text{sa-fair}(\meventually \goal)$, i.e.,  the \goal is reached on state-action fair traces.
 \item $(D,f) \models A^{=1}(\meventually \goal)$, i.e., the \goal is reached with probability one.
\end{enumerate}
\end{proposition}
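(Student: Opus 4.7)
The plan is to reduce both conditions to standard properties of the finite Markov chain $M=(D,f)$ induced by the finite-state policy $f$. The central tool is the classical fact that in a finite Markov chain, with probability $1$ the infinite trajectory enters some bottom strongly connected component (BSCC) $B$ of $M$ and visits every state and every enabled transition of $B$ infinitely often. Since $(1)$ is equivalent under this probabilistic notion (the $A^{=1}$ semantics depends only on the induced nondeterministic domain), we may pick any positive-probability transition function supported on $Tr$.

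For $(1)\Rightarrow(2)$, pick a random trajectory in $M$. By the above fact, with probability $1$ it eventually settles inside some BSCC $B$ and fires every $M$-transition of $B$ infinitely often. I claim that the projection of any such trajectory to $D$ is a state-action fair $f$-trace: whenever $(s,q)\in B$ is visited infinitely often, the policy picks a fixed action $a=f_q(s)$, and by assumption every $M$-successor $(s',q')$ is also visited infinitely often, so in the projection each $s'\in Tr(s,a)$ appears infinitely often right after $s,a$. Hence, by hypothesis $(1)$, such a trajectory reaches $\goal$; the probability of reaching $\goal$ is therefore $1$.

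For $(2)\Rightarrow(1)$, I would argue by contrapositive. Suppose $\tau$ is a state-action fair $f$-trace that never reaches $\goal$, and lift $\tau$ to the corresponding infinite path $\pi$ in $M$. Let $Q_\infty$ be the set of $M$-states visited infinitely often along $\pi$. Then $Q_\infty$ is strongly connected (a classical finite-system fact), and I claim it is in fact a BSCC: if $(s,q)\in Q_\infty$ and the policy plays $a=f_q(s)$, then for each $s'\in Tr(s,a)$ state-action fairness forces $s'$ to appear infinitely often right after $s,a$ in $\tau$, so the unique corresponding $M$-successor lies in $Q_\infty$. Since $\tau$ avoids $\goal$, no state in $Q_\infty$ is a goal state. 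The finite prefix of $\pi$ that first enters $Q_\infty$ has strictly positive probability in $M$ (each transition taken has positive probability in its support); and conditioned on reaching $Q_\infty$, the trajectory remains there forever almost surely. This gives a positive-measure set of $f$-traces that never reach $\goal$, contradicting $(2)$.

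I expect the only delicate step to be the bridge between the domain-level notion of state-action fairness and the BSCC structure of $M$, since $M$-states carry additional policy memory. The argument goes through cleanly because $f$ is deterministic once its memory state is fixed, so for each $(s,q)$ the action $a$ is uniquely determined and each domain-level outcome $s'\in Tr(s,a)$ corresponds to a uniquely determined $M$-successor $(s',q')$; this bijection is what lets us transport fairness back and forth between $\tau$ and $\pi$. The rest is essentially finite-Markov-chain bookkeeping.
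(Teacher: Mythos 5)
Your argument follows the paper's own proof almost exactly (lift to the induced Markov chain $M$, reason about BSCCs in both directions), and the $(1)\Rightarrow(2)$ direction is fine --- you in fact spell out the BSCC argument that the paper only cites. The problem is precisely the step you single out as ``the only delicate step'' in $(2)\Rightarrow(1)$: the claim that $Q_\infty$ is closed under $M$-transitions. Your justification is that for $(s,q)\in Q_\infty$ with $a=f_q(s)$, fairness forces $(s,a,s')$ to occur infinitely often in $\tau$, and hence ``the unique corresponding $M$-successor'' of $(s,q)$ lies in $Q_\infty$. This does not follow: state-action fairness is a property of the \emph{projected} trace, so it only guarantees infinitely many occurrences of $(s,a,s')$ paired with \emph{some} memory state, possibly always a memory state $q''\neq q$; those occurrences witness $(s',\delta(q'',s'))\in Q_\infty$, not $(s',\delta(q,s'))\in Q_\infty$. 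The bijection you invoke goes from $M$-transitions down to domain transitions, but fairness does not let you climb back up to the particular $M$-transition you need.

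Concretely, take $D$ with states $s,u,v,g$, initial state $s$, transitions $Tr(s,a)=\{u,v\}$, $Tr(u,b)=Tr(v,b)=\{s\}$, $Tr(u,c)=\{g\}$, $Tr(g,d)=\{g\}$, and goal $g$. Let $f$ play $a$ at $s$, $b$ at $v$, $d$ at $g$, and at $u$ play $b$ if the number of $s$'s seen so far is odd and $c$ if it is even (a two-state memory). Every run avoiding $g$ visits $(s,\mathrm{even})$ infinitely often, and each such visit has a fixed positive probability of moving to $(u,\mathrm{even})$ and then to $g$, so $(D,f)\models A^{=1}(\meventually g)$. Yet the trace with state sequence $(s\,u\,s\,v)^\omega$ is an $f$-trace, is state-action fair (both $(s,a,u)$ and $(s,a,v)$ occur infinitely often, and $(u,c)$ never occurs), and never reaches $g$. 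Here $Q_\infty=\{(s,\mathrm{odd}),(u,\mathrm{odd}),(s,\mathrm{even}),(v,\mathrm{even})\}$ is strongly connected but not closed, so your contrapositive argument cannot be completed. To be fair, this is not only a defect of your write-up: the paper's proof asserts the same unproved claim (``$\pi$ reaches a BSCC of $M$ and visits every state in it''), which the example above also refutes. The closure of $Q_\infty$, and with it the whole argument, is sound when $f$ is \emph{memoryless}: then the states of $M$ are just domain states and fairness of $\tau$ literally says that $Q_\infty$ is closed under the transitions selected by $f$. If you want a correct proof at the level of generality stated, you must either restrict to memoryless policies or find an argument that does not go through closure of $Q_\infty$ in the product chain.
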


\begin{proof}
Assume that $(D,f) \models A^\text{sa-fair}(\meventually \goal)$. Observe that the state-action fair traces have probability $1$,
cf.~\cite{Vardi:LICS86}, and thus, by definition, $(D,f) \models A^{=1}(\meventually \goal)$.
For the other direction, assume by way of contradiction that $2.$ holds but $1.$ doesn't, and pick an infinite state-action fair $f$-trace $\tau$ that doesn't satisfy $\meventually \goal$. Let $M$ be the finite-state Markov chain induced by $D$ and $f$, viewed as a directed graph, and let $\pi$ be the path in $M$ induced by $\tau$. Since $\tau$ is state-action fair, $\pi$ reaches a bottom strongly connected component $C$ of $M$, and visits every state in $C$. By the assumption that $\tau \not \models \meventually \goal$, $\pi$ contains no state in which $\goal$ holds.
Let $\rho$ be some (fixed) prefix of $\pi$ that ends in a state in $C$, and consider the set $E$ of infinite
$f$-traces whose induced paths have $\rho$ as a prefix. Observe that the probability of $E$ is positive, and none of the traces in $E$ satisfy $\meventually \goal$. This contradicts $2$.
\end{proof}

We now turn to goals expressed as \LTL/\LTLf formulas. Unfortunately, in this case the analogue of Proposition~\ref{prop:equivalent for reachability goals} does not hold. Indeed, only the forward direction holds.

\begin{proposition}  \label{prop:underapprox}
Let $D$ be a domain, $\psi$ and \LTL/\LTLf formula, and $f$ a finite-state policy. If $(D,f) \models A^{\text{sa-fair}}(\psi)$ then $(D,f) \models A^{=1}(\psi)$.
\end{proposition}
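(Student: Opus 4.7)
The plan is to prove the contrapositive-free direction directly using the classical fact that the set of state-action fair traces carries probability one in the Markov chain induced by $(D,f)$. Since $f$ is finite-state, the induced Markov chain $(D,f)$ is finite-state, so all the standard measure-theoretic machinery applies cleanly and the set of infinite $f$-traces satisfying any \LTL/\LTLf formula $\psi$ is measurable (it is the preimage of a Borel set under the natural map from trajectories to label sequences).

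First I would fix an arbitrary stochastic domain $D'$ whose induced nondeterministic domain is $D$; by the comment preceding the definition of $A^{=1}$, the value $(D,f) \models A^{=1} \psi$ is independent of this choice, so proving the implication for $D'$ suffices. Let $\mu$ denote the probability measure over infinite $f$-traces in $(D',f)$. The core observation I would invoke, referencing \cite{Vardi:LICS86}, is that in any finite-state Markov chain the set of state-action fair paths has measure $1$: from every state, with probability $1$, a bottom strongly connected component is reached and all of its edges are taken infinitely often. Applied to $(D',f)$, this yields that the set $F$ of infinite state-action fair $f$-traces satisfies $\mu(F) = 1$.

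Next, let $S_\psi$ be the set of infinite $f$-traces satisfying $\psi$ (under our convention that an infinite trace satisfies an \LTLf formula iff some finite prefix does). The hypothesis $(D,f) \models A^{\text{sa-fair}}(\psi)$ says exactly $F \subseteq S_\psi$, so by monotonicity of $\mu$ we get $\mu(S_\psi) \geq \mu(F) = 1$, hence $\mu(S_\psi) = 1$. This is precisely $(D,f) \models A^{=1}(\psi)$.

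The main obstacle, which is really only a bookkeeping point, is justifying the probability-one fact for state-action fairness and the measurability of $S_\psi$; both are standard but need the finiteness of the Markov chain, which is where the assumption that $f$ is finite-state is used. (Note this assumption is genuinely needed: for non-finite-state policies the induced Markov chain can be infinite, and state-action fairness need not have measure one. Thus I would flag, but not prove, that the statement is tailored to finite-state $f$.) Everything else is a direct subset argument plus monotonicity of measure, with no case analysis on the structure of $\psi$ required.
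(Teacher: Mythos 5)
Your proof is correct and follows essentially the same route as the paper's: both reduce the claim to the fact that the set of state-action fair $f$-traces has probability one in the finite-state Markov chain $(D,f)$ (cf.\ the argument in Proposition~\ref{prop:equivalent for reachability goals}), after which the conclusion is immediate from $F \subseteq S_\psi$ and monotonicity of the measure. The extra bookkeeping you supply (measurability of $S_\psi$, independence of the choice of stochastic domain $D'$, and the role of finite-stateness) is left implicit in the paper but is consistent with it.
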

\begin{proof}
As in Proposition~\ref{prop:equivalent for reachability goals}, simply use the fact that the set of infinite state-action fair $f$-traces has probability $1$.
\end{proof}

The next proposition shows that the converse of Proposition~\ref{prop:underapprox} does not hold.
Intuitively, the reason is that, assuming stochastic fairness, every finite trace that is enabled infinitely often appears with probability 1, while assuming state-action fairness, this is only true for traces of length one.

For the next proposition, recall Example~\ref{cex}.
\begin{proposition} \label{prop:cex}
There is a domain $D$, a finite-state policy $f$, and an \LTLf goal $\psi$ such that
$(D,f) \models A^{=1}(\psi)$, but for no policy $g$ does it hold that $(D,g) \models A^{\text{sa-fair}}(\psi)$.
\end{proposition}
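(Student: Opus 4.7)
The plan is to reuse the three-state domain $D$ of Example~\ref{cex} verbatim. Since the only applicable action from every state is $a$, there is exactly one policy in $D$, namely $f$ that always outputs $a$; so ``no policy $g$'' reduces to showing that this single $f$ fails the state-action-fair spec while satisfying the probability-one spec. For the goal I would take the \LTLf formula
\[
\psi \ :=\ \meventually\bigl(l \wedge \mnext \mnext l\bigr),
\]
interpreted on infinite traces via the paper's convention (satisfied iff some finite prefix satisfies it).

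For the state-action-fairness side I would verify that the specific trace $\tau = (\{l,a\}\{m,a\}\{r,a\}\{m,a\})^\omega$ from Example~\ref{cex} is an $f$-trace and is state-action fair: the only transitions having a choice are those from $(m,a)$, and $\tau$ uses both $m\to l$ and $m\to r$ infinitely often, while all deterministic transitions are trivially fair. Then I would read off that $l$ appears in $\tau$ exactly at positions $0,4,8,\ldots$, and that position $4k+2$ is always the state $r$. Consequently no position $k$ of $\tau$ satisfies $l \wedge \mnext\mnext l$, so no finite prefix of $\tau$ satisfies $\psi$ and hence $\tau \not\models \psi$. This exhibits a state-action-fair $f$-trace violating $\psi$, so $(D,f) \not\models A^{\text{sa-fair}}(\psi)$.

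For the stochastic side I would fix any stochastic domain whose induced nondeterminism is $D$; by definition the support of $Pr(m,a)$ equals $\{l,r\}$, so $p := Pr(m,a)(l) > 0$. The Markov chain $(D,f)$ is finite and irreducible on $\{l,m,r\}$, so $l$ is visited infinitely often almost surely. Let $T_1 < T_2 < \cdots$ be the successive visit times to $l$ and let $X_i$ be the indicator that the state at time $T_i+2$ equals $l$. After each $T_i$ the chain moves deterministically to $m$ at time $T_i+1$, and then transitions to $l$ with probability exactly $p$; by the strong Markov property the $X_i$ are i.i.d.\ Bernoulli$(p)$, so $\Pr(\forall i\colon X_i = 0) = \lim_n (1-p)^n = 0$. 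Hence with probability $1$ some $X_i = 1$, i.e.\ some finite prefix satisfies $l \wedge \mnext\mnext l$, proving $(D,f) \models A^{=1}(\psi)$.

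I do not expect real obstacles here; the delicate point is just being explicit about the \LTLf-on-infinite-traces convention (so that $\tau \not\models \psi$ really means ``no finite prefix satisfies $\psi$'') and about the independence of the $X_i$'s, which rests on the fact that $X_i$ is determined by a single transition out of $m$ at time $T_i+1$ and is therefore independent of the history up to time $T_i+1$ by the Markov property.
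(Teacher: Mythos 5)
Your proposal is correct and is essentially the paper's own proof: the same domain from Example~\ref{cex}, the same goal $\meventually(l \land \mnext\mnext l)$, and the same witness trace $\tau = (\{l,a\}\{m,a\}\{r,a\}\{m,a\})^\omega$ for the state-action-fair failure. The only difference is that you spell out the almost-sure satisfaction via the strong Markov property, which the paper leaves as an observation; your added detail is accurate.
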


\begin{proof}
Let $D$ be the domain from Example~\ref{cex}. Let $\psi$ be the \LTL/\LTLf formula $\meventually (l \land \mnext \mnext l)$ (i.e., eventually $l$ and two steps afterwards $l$ again). There is only one policy $f$ available: it always chooses the action $\{a\}$. Observe that $(D,f) \models A^{=1}(\psi)$, but that $(D,f) \not \models A^{\text{sa-fair}}(\psi)$ as witnessed by the trace $\tau := (\{l,\mathrm{a}\} \{m,\mathrm{a}\} \{r,\mathrm{a}\} \{m,\mathrm{a}\})^\omega$.
\end{proof}

\section{Confusion in the literature} \label{sec:confusion}
Certain algorithms in the literature for solving state-action fair planning problems with temporally extended goals rely on a reduction to another state-action fair planning problem, that, as we prove, is complete but not sound. The papers, in order of publication, are \cite{PatriziLG13}[Theorem $3$], \cite{GiacomoRubin:IJCAI18}[Theorem $4$] and \cite{Camacho:IJCAI19}[Theorem $2$]. If one assume stochastic-fairness instead of state-action fairness, then the reduction is both sound and complete. This suggests that the cited algorithms are correct if one assumes stochastic fairness instead of state-action fairness.

\subsubsection{The reduction}
We begin by describing the reduction without any mention of fairness.
From a planning problem $\tup{D,\psi}$, first define a deterministic automaton $A_\psi$ that recognizes exactly the traces that satisfy $\psi$. 
Second, define the domain $D' = D \times A_\psi$ as the synchronous product of $D$ and $A_\psi$. Finally, define the planning problem $\tup{D',Acc}$ where $Acc$ is a goal that captures the acceptance condition of $A_\psi$, i.e., $Acc$ consists of those traces of $D'$ whose first components are traces of $D$ that are accepted by $A_\psi$.\footnote{The representation of $Acc$ is induced by the acceptance condition of $A_\psi$, but here the specific representation of $Acc$ is not relevant.}

\subsubsection{Analsis of the reduction} If this reduction is to be used to give an exact algorithm for planning assuming state-action fairness, it should be sound and complete, i.e., $\tup{D,\psi}$ is solvable assuming state-action fairness iff $\tup{D',Acc}$ is solvable assuming state-action fairness. The reduction is indeed complete because every state-action fair trace in the product domain $D'$ projects to a state-action fair trace in $D$ (this follows immediately from the definition of state-action fairness and of the synchronous product). On the other hand, the reduction is not sound because there may be fair traces in $D$ that do not induce any fair trace in $D'$ (intuitively, this is due to synchronization in $D'$ between the domain $D$ and the automaton $A_\psi$). We formalise this in the following theorem which actually shows that the reduction is not sound no matter which deterministic automaton $A_\psi$ for $\psi$ is used.

\begin{theorem} \label{thm:state-action fairness and product}
There is a domain $D$, and an \LTL/\LTLf goal $\psi$, s.t. a) there is no solution to the state-action fair planning problem $\tup{D,\psi}$, but b) for every deterministic automaton $A_\psi$ accepting exactly the traces that satisfy $\psi$, there is a solution to the state-action fair planning problem $\tup{D',Acc}$, where $D'$ is the product of $D$ and $A_\psi$, and $Acc$ captures the acceptance condition of $A_\psi$.
\end{theorem}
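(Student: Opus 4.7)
The plan is to reuse the three-state domain $D$ of Example~\ref{cex} and the goal $\psi = \meventually(l \wedge \mnext\mnext l)$ from the proof of Proposition~\ref{prop:cex}. Part~(a) is immediate: the state-action-fair trace $\tau = (\{l,a\}\{m,a\}\{r,a\}\{m,a\})^\omega$ is generated by the only policy of $D$ and fails $\psi$, so $\langle D,\psi\rangle$ has no state-action-fair solution. For part~(b), fix any deterministic automaton $A_\psi$ for $\psi$ and set $D' = D \times A_\psi$; since $a$ is the only applicable action everywhere in $D$ and therefore in $D'$, the policy is unique, so it suffices to show that \emph{every} state-action-fair infinite trace $\sigma$ of $D'$ satisfies $Acc$, i.e., its $D$-projection $\tau$ satisfies $\psi$.

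I argue by contradiction. Suppose $\sigma$ is state-action-fair in $D'$ but $\tau \not\models \psi$; then no accepting state of $A_\psi$ is visited along $\sigma$. A first pigeonhole on automaton states implies that the $D$-projection of any state-action-fair trace of $D'$ is itself state-action-fair in $D$ --- if $(s,a)$ recurs in $\tau$ then some $((s,q),a)$ recurs in $\sigma$, and its $D'$-successors project to every $D$-successor of $(s,a)$. Together with $\tau \not\models \psi$, this forces $\tau$ to visit $l$ infinitely often and every $l$-position $j$ to be followed by $m,r$ at $j+1,j+2$. A second pigeonhole produces a fixed non-accepting $q^{*}$ such that $\sigma_j = (l,q^{*})$ for infinitely many $j$; setting $q^{**} = \delta(q^{*},\{l,a\})$ and $q^{***} = \delta(q^{**},\{m,a\})$, the state $(m,q^{**})$ recurs infinitely often in $\sigma$. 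State-action fairness then forces both outcomes $(l,q^{***})$ and $(r,q^{***})$ of $(m,q^{**})$ to recur, so one can pick a position $i$ with $\sigma_i=(m,q^{**})$ and $\sigma_{i+1}=(l,q^{***})$; the hypothesis $\tau \not\models \psi$ rules out $\tau_{i-1}=l$ (else $\tau_{i-1}=\tau_{i+1}=l$ would witness $\psi$), so $\tau_{i-1}\neq l$.

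The argument closes with the following structural lemma on $A_\psi$, which I view as the main obstacle: \emph{every non-accepting state of $A_\psi$ is the target of either only $l$-labelled transitions or only non-$l$-labelled transitions, but never both.} This is a Myhill--Nerode property of $L(\psi)$ --- a non-accepting equivalence class must record whether its last letter is $l$ (otherwise one cannot decide whether another $l$ triggers the pattern $l \cdot ? \cdot l$), so the $l$-image and non-$l$-image of $\delta$ are disjoint on non-accepting classes --- and it transfers to $A_\psi$ because any DFA for $L(\psi)$ is a refinement of the minimum DFA. Applied to $q^{**}$: it lies in the $l$-image of $\delta$ by construction, while $\tau_{i-1}\neq l$ forces it into the non-$l$-image, a contradiction. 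The remaining combinatorics uses only that $A_\psi$ is finite-state and deterministic, so the argument extends uniformly to the \LTL reading of $\psi$ (equivalent here since $\psi$ is a reachability property) by interpreting ``accepting state'' as ``a state from which every infinite continuation is accepted by $A_\psi$''.
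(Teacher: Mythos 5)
Your proof is correct, but it takes a genuinely different route from the paper's. The paper keeps the domain of Example~\ref{cex} but replaces the goal by the more elaborate formula $\lnot l \lor \meventually (l \land \mnext \mnext \lnot r) \lor \meventually (l \land \mnext\mnext\mnext\mnext \lnot l)$, engineered so that \emph{exactly one} trace of $D$ fails it, namely the period-$4$ trace $\tau$; the unfairness of the (unique) lift of $\tau$ is then argued locally along that single periodic word, by showing that $A_\psi$ must be in different states one step and three steps after an $l$ (it must not confuse ``$\lnot r$ two steps after $l$'' with ``$\lnot l$ four steps after $l$''). You instead keep the simpler goal $\meventually(l\land\mnext\mnext l)$ of Proposition~\ref{prop:cex}, which is failed by the whole infinite family $l\,m\,(r\,m)^{k_1}\,l\,m\,(r\,m)^{k_2}\cdots$ of state-action-fair traces, and you dispose of all of them at once with a Myhill--Nerode separation: along any pattern-free run, a prefix ending in $l$ and a prefix ending in $r$ have different residuals (witnessed by the legal continuation $m\,l\,m\,(r\,m)^\omega$), so they cannot land in the same automaton state, contradicting the fairness-forced transition $(m,q^{**})\to(l,q^{***})$ from a position not preceded by $l$. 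What your approach buys is a cleaner key lemma and the aesthetic bonus that one formula serves both Proposition~\ref{prop:cex} and this theorem; what the paper's buys is that only one bad trace ever needs to be considered. Two small polish points: your structural lemma should be stated only for \emph{reachable} states reached via pattern-free prefixes (an arbitrary $A_\psi$ may have junk transitions into unreachable states), which is all your application to $q^{**}$ uses; and the detour through ``non-accepting states'' is unnecessary --- the residual argument $\delta^*(q_0,u)\neq\delta^*(q_0,v)$ works directly for any deterministic acceptance mode, including $\omega$-acceptance, since equal states force equal tails of the runs.
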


\begin{proof}
Let $D$ (resp. $\tau$) be the domain (resp. trace) from Example~\ref{cex}, let $\psi$ be the formula
$\lnot l \lor \meventually (l \land \mnext \mnext \lnot r) \lor \meventually (l \land \mnext \mnext \mnext \mnext \lnot l)$, and observe that all traces of $D$ satisfy $\psi$ except for the trace $\tau$. 

There is a single policy $f$ available in $D$, i.e., always perform the single applicable action. However, $\tau$ is a state-action-fair $f$-trace that does not satisfy $\psi$. Thus, there is no solution to the state-action fair problem $\tup{D,\psi}$.

We claim that the single policy available in $D'$ is a solution to $\tup{D',Acc}$. For this, it is enough to show that every state-action fair trace in $D'$ induces in $D$ a trace that satisfies $\psi$, i.e., a trace other than $\tau$. Let $\tau'$ be a trace in $D'$ that induces $\tau$. To see that $\tau'$ is not state-action fair, let $(m,s)$ be a state that appears in $\tau'$ infinitely often after a state of the form $(l,?)$. Note that $(m,s)$ never appears as a source of a transition to a state of the form $(l,?)$. Indeed, since $l$ occurs on $\tau$ exactly every four steps, the source of such a transition is only reached three steps after reading an $l$; and while reading $\tau$, $A_\psi$ is always in a different state than $s$ three steps after reading an $l$ (so not to confuse occurrences of $\lnot l$ four steps after an $l$ with ones two steps after it). Thus, some successor $(l,q)$ of $(m,s)$ is enabled infinitely often but never taken.
\end{proof}

We note, however, that if one uses stochastic fairness instead of state-action fairness then the reduction above is sound and complete. This is because stochastic-fairness is preserved by taking a product with a deterministic automaton, a fact which is exploited in the automata-theoretic approach to verification of probabilistic systems~\cite{Vardi:FOCS85,Courcoubetis:JACM95,Bianco:FSTTCS95,Bollig:VSS04}:
\begin{theorem} \label{thm:product stochastic}
Let $\tup{D,\psi}$ be a planning problem, and let $\tup{D',Acc}$ be a planning problem constructed as in the reduction above. There is a policy solving $\tup{D,\psi}$ assuming stochastic fairness iff there is a policy solving $\tup{D',Acc}$ assuming stochastic fairness.
\end{theorem}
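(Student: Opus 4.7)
The plan is to use the determinism of $A_\psi$ to set up a measure-preserving bijection between infinite traces of $(D,f)$ and infinite traces of $(D',f')$ for naturally corresponding policies $f$ and $f'$. Since $A_\psi$ reads the alphabet $2^{\F \cup \A}$ deterministically, any state-action sequence in $D$ induces a unique sequence of automaton states, and hence a unique lifted trace in $D'$; conversely, every trace of $D'$ projects on its first component to a trace of $D$. Moreover, $\tau \models \psi$ iff $A_\psi$ accepts $\tau$ iff the lift $\tau'$ belongs to $Acc$. Probabilities transfer naturally by setting $Pr'((s,q),a)((s',q')) = Pr(s,a)(s')$ whenever $\delta(q, s \cup a) = q'$ and $0$ otherwise, so all probabilistic branching in $D'$ lives in the $D$-component.

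For the forward direction, given a policy $f$ with $(D,f) \models A^{=1} \psi$, I would define $f'((s_0,q_0) \cdots (s_n,q_n)) := f(s_0 \cdots s_n)$. The projection $(s,q) \mapsto s$ is then a measure-preserving bijection from infinite $f'$-traces of $D'$ onto infinite $f$-traces of $D$, so the $f'$-measure of traces in $Acc$ equals the $f$-measure of traces satisfying $\psi$, namely $1$. For the backward direction, given $f'$ with $(D',f') \models A^{=1} Acc$, I would define $f$ on $D$ inductively: having fixed the actions on all strict prefixes of $s_0 \cdots s_n$, I can deterministically compute the unique automaton-state sequence $q_0, \ldots, q_n$ using $\delta$, and then set $f(s_0 \cdots s_n) := f'((s_0,q_0) \cdots (s_n,q_n))$. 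The same bijection transports the almost-sure satisfaction of $Acc$ back to that of $\psi$.

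The only delicate point---more bookkeeping than genuine obstacle---is verifying measure-preservation on cylinder sets (and hence on all Borel sets, including those definable by $\psi$ and $Acc$) and invoking the fact already noted in the paper, following \cite{Vardi:LICS86}, that $A^{=1}$-satisfaction depends only on the supports of $Pr$. The essential reason this reduction is sound for stochastic fairness, while failing for state-action fairness (see Theorem~\ref{thm:state-action fairness and product}), is precisely the determinism of $A_\psi$: since the automaton component evolves deterministically, no synchronization artefact between $D$ and $A_\psi$ can introduce spurious traces whose measure changes under the projection.
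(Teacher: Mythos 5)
Your proof is correct and is essentially the standard argument that the paper itself invokes: the paper gives no explicit proof of Theorem~\ref{thm:product stochastic}, instead citing the known fact from probabilistic verification that almost-sure satisfaction is preserved under product with a \emph{deterministic} automaton, and your measure-preserving lifting/projection between $f$-traces of $D$ and $f'$-traces of $D'$ (with the inductive reconstruction of the automaton component in the backward direction) is precisely the argument behind that citation.
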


In summary, we conjecture that some errors in the proofs and algorithms for state-action fair planning in the literature arise from the mistaken intuition that state-action fairness always behaves like stochastic fairness, which it does not in the presence of even simple \LTL/\LTLf formulas (that are not reachability formulas).

\section{Algorithm for State-action Fair Planning} \label{sec:algorithms-and-upper-bounds}

In the previous section we showed that some algorithms in the literature for state-action fair planning for temporally extended goals use complete but unsound reductions. In this section, we provide a sound and complete reduction to the problem of solving \emph{Rabin games} (defined below). 

\begin{theorem} \label{thm:upper bounds}
	The combined (and thus goal) complexity of solving planning with \LTL/\LTLf goals assuming state-action fairness is in $2$\exptime, and the domain complexity is in $1$\nexptime\ (in the size of a compactly represented domain).
\end{theorem}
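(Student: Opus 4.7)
The plan is to reduce the state-action fair planning problem $\tup{D,\psi}$ to solving a single Rabin game on the synchronous product of $D$ with a deterministic automaton for $\psi$. The key observation is that a policy $f$ solves $\tup{D,\psi}$ under state-action fairness iff every infinite $f$-trace satisfies the implication $\phi_{D,fair} \limp \psi$. Crucially, unlike the unsound reduction discussed in Section~\ref{sec:confusion}, I would carry $\phi_{D,fair}$ into the game as an explicit winning condition, rather than hoping that state-action fairness of the product coincides with state-action fairness of $D$.

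First, convert $\psi$ into a deterministic Rabin automaton $A_\psi$ of doubly-exponential size in $|\psi|$ with exponentially many Rabin pairs (standard determinization); for an \LTLf goal one may use a \DFA, a special case. Then form the synchronous product $D \times A_\psi$ and equip it with the Rabin winning condition $\neg \phi_{D,fair} \vee Acc(A_\psi)$. That is, the agent wins a play if either some triple $(s,a,s') \in Tr$ witnesses a failure of fairness in $D$ (the pair $(s,a)$ is visited infinitely often but the transition to $s'$ is taken only finitely often), or the $A_\psi$-component satisfies the Rabin acceptance condition of $A_\psi$. Since $\phi_{D,fair}$ is a conjunction of Streett pairs, its negation is a Rabin condition, and Rabin conditions are closed under disjunction by taking the union of pairs, so the combined condition is a Rabin condition with $O(|Tr|)$ extra pairs beyond those of $A_\psi$.

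Soundness and completeness of this reduction are immediate: a winning memoryless strategy in the Rabin game projects to a finite-state policy in $D$ whose every infinite $f$-trace satisfies $\phi_{D,fair} \limp \psi$, and conversely any policy solving $\tup{D,\psi}$ under state-action fairness lifts to a winning strategy by remembering the current $A_\psi$-state. For the combined complexity, the product arena has doubly-exponentially many states in $|\psi|$ and exponentially many in $|D_{compact}|$, and the number of Rabin pairs is doubly-exponential in $|\psi|$ plus polynomial in $|D|$. Rabin games on an explicit arena of size $N$ with $k$ pairs are solvable in time $N^{O(k)}$, which is doubly-exponential in $|\psi|+|D_{compact}|$, giving the 2\exptime\ bound.

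For the domain complexity, fix $\psi$ so that $A_\psi$ is of constant size. The product arena is then exponential in $|D_{compact}|$ and has exponentially many Rabin pairs (one per transition of $D$). Since Rabin game solving is in \np\ in the size of its explicit arena (guess a memoryless winning strategy and verify it in polynomial time by analysing strongly connected components of the strategy's subgraph), this yields the 1\nexptime\ bound in $|D_{compact}|$. The main piece of care is in packaging the combined winning condition as a genuine Rabin condition without further blow-up; this works smoothly because $\phi_{D,fair}$ is already a conjunction of Streett pairs that dualises to a Rabin condition with exactly one pair per transition of $D$, and these pairs are then simply added to those inherited from $A_\psi$.
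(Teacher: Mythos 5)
Your reduction is essentially the paper's: build a deterministic Rabin automaton for $\psi$, take its product with $D$, and win the resulting Rabin game iff the play is state-action unfair in $D$ or accepted by $A_\psi$; then use the $O(d!\,n^d m)$ algorithm for the $2$\exptime{} combined/goal bound and \np-membership of Rabin games for the $1$\nexptime{} domain bound. The paper packages the ``unfair'' disjunct as a separate DRW $M_{D,\mathit{unfair}}$ and takes an automaton union, whereas you dualise the Streett pairs of $\phi_{D,fair}$ directly into Rabin pairs on the arena --- the same idea.

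Two small points to tighten. First, the pair $(\{sa\},\{sas'\})$ is not a set of states of your arena $D\times A_\psi$: a state $(s',q')$ does not remember the transition by which it was reached, so to make the fairness disjunct a genuine state-based Rabin condition you must enlarge the arena to record the last state-action(-state), exactly what the paper's $M_{D,\mathit{unfair}}$ does; this costs only another factor exponential in the compact domain and changes nothing. Second, your complexity paragraph says the number of Rabin pairs is ``doubly-exponential in $|\psi|$,'' contradicting your own (correct) earlier count of singly-exponentially many pairs from determinization; with doubly-exponentially many pairs the bound $N^{O(k)}$ would be $3$\exptime, so the singly-exponential index is what actually makes the argument go through. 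Likewise, the number of fairness pairs is $|Tr|$, which is exponential (not polynomial) in the compactly represented domain --- harmless for the stated bounds, and you do account for it correctly in the domain-complexity part.
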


The main approach to solving such a problem is to use, explicitly or implicitly, an automata-theoretic approach. However, as we now remark, 
naive applications of this approach yield a $3$\exptime domain-complexity (which we then show how to lower to $1$\nexptime), a $3$\exptime combined-complexity (which we then show how to lower to $2$\exptime), and a $2$\exptime goal complexity.

\begin{remark}
The problem of solving the state-action fair planning problem $\tup{D,\psi}$ where $\psi$ is an \LTL/\LTLf formula is equivalent to solving the planning problem $\tup{D,\phi_{D,fair} \limp \psi}$ where \[
\phi_{D,fair} := \bigwedge_{(s,a,s') \in Tr} \left(\malways \meventually (s \wedge a) \limp \malways \meventually (s \wedge a \wedge \mnext s'))\right)
\]
is an \LTL formula expressing state-action fairness in the domain $D$ (for more on this equivalence see~\cite{Aminof:ICAPS19}). However, the size of $\phi_{D,fair}$ is \emph{exponential} in the size of $D$ (compactly represented). Thus, we have reduced the problem to solving planning for an \LTL goal of size exponential in the size of $D$ and linear in the size of $\psi$. In turn, there are algorithms that solve planning with \LTL goals (no fairness assumptions) that run in $1$\exptime in the size of the domain and $2$\exptime in the size of the goal~\cite{Aminof:ICAPS19,Camacho:ICAPS19}. Putting this together results in an algorithm for the state-action-fair planning problem that runs in $3$\exptime in the size of the domain $D$ and $2$\exptime in the size of the formula $\psi$. \end{remark}

The main insight that achieves the complexities in Theorem~\ref{thm:upper bounds} is that one should use Rabin conditions.  

A \emph{Rabin condition} over a set $X$ is a set $\R$ of pairs of the form $(I,F)$ with $I,F \subseteq X$. The pairs are called \emph{Rabin pairs}. An infinite sequence $\tau$ over the alphabet $X$ is said to \emph{satisfy the Rabin condition} $\R$ if there is a pair $(I,F) \in \R$ such that some $x \in I$ appears infinitely often in $\tau$ and no $x \in F$ appears infinitely often in $\tau$.\footnote{The reader might find it helpful to read the Rabin condition in \LTL notation: $\bigvee_{(I,F) \in \R} \malways \meventually I \land \lnot  \malways \meventually F$.}
Below we use Rabin conditions in two ways: as acceptance conditions (for automata) and as winning conditions (in games).

\subsubsection{Rabin Automata} 
A \emph{Deterministic Rabin Word (DRW) automaton} is an automaton $M = (\Sigma, Q,q_0,\delta,\R)$ where the acceptance condition $\R$ is a Rabin condition over $Q$. The \emph{size} of a DRW is the number of its states and its \emph{index} is the number of pairs in $\R$.

The reader may be wondering why we chose the Rabin acceptance condition instead of some other acceptance condition. The reason is: they can capture very general properties, including \LTL/\LTLf; they are naturally closed under union; they can naturally express that a trace is not state-action fair.

\begin{theorem} \label{thm:LTL to DRW} [cf. \cite{Vardi:Banff95}] 
	Given an \LTL/\LTLf formula $\psi$ one can build a DRW $M_\psi$ that accepts exactly the infinite traces satisfying $\psi$.\footnote{Recall that we define that an infinite trace satisfies an \LTLf formula $\psi$ if some prefix of it satisfies $\psi$.} Moreover, $M_\psi$ has size $2$exp and index $1$exp in $|\psi|$.
\end{theorem}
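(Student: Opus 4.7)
The plan is to handle the \LTL and \LTLf cases separately, then combine. The two cases use different intermediate automata, but both end with a deterministic Rabin word automaton of the required size and index.

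For \LTL: I would first translate $\psi$ into an equivalent nondeterministic B\"uchi word automaton $N_\psi$ using the standard Vardi--Wolper tableau construction, which is known to produce an NBW of size $2^{O(|\psi|)}$. Then I would apply Safra's determinization construction to $N_\psi$, which turns an NBW of size $n$ into an equivalent DRW of size $2^{O(n \log n)}$ with $O(n)$ Rabin pairs. Composing these two steps yields a DRW whose state space is doubly exponential in $|\psi|$ and whose index is singly exponential in $|\psi|$, which matches the stated bounds.

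For \LTLf: Recall the paper's convention that an infinite trace $\tau$ satisfies an \LTLf formula $\psi$ iff some finite prefix of $\tau$ satisfies $\psi$. I would first build a DFW $F_\psi$ recognizing exactly the finite traces satisfying $\psi$; this is doubly exponential in $|\psi|$ (e.g., via \LTLf-to-NFW and subset construction, as in Giacomo--Vardi). Then I would turn $F_\psi$ into a DRW $M_\psi$ over infinite traces by adding a fresh absorbing ``good'' state $g$: whenever a run of $F_\psi$ enters an accepting state, the DRW jumps to $g$ (and stays there under every letter). The acceptance condition is the single Rabin pair $(\{g\},\emptyset)$, which says $g$ must be visited infinitely often; since $g$ is absorbing this is equivalent to reaching $g$, i.e., to some prefix of the input being accepted by $F_\psi$. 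The resulting DRW has size $2$exp in $|\psi|$ and index $1$ (hence certainly $1$exp).

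The main obstacle is the \LTL case, and specifically invoking Safra's determinization with the correct bookkeeping on state count \emph{and} on the number of Rabin pairs; this is the nontrivial ingredient and is exactly what references such as \cite{Vardi:Banff95} provide. The \LTLf case is conceptually easier once the ``some prefix satisfies $\psi$'' convention is handled by the absorbing-state trick, and the bounds there follow directly from the standard \LTLf-to-DFW translation.
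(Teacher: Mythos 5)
Your proof is correct and follows exactly the standard route that the paper itself delegates to the cited reference (\LTL $\to$ nondeterministic B\"uchi via Vardi--Wolper, then Safra determinization for the doubly-exponential state count and singly-exponential Rabin index; \LTLf $\to$ DFW plus an absorbing accepting sink with one Rabin pair). The paper gives no proof of its own for this theorem, so there is nothing further to compare.
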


\begin{lemma} \label{lem:unfair DRW}
	Given a domain $D$ one can build a DRW $M_{D,unfair}$ that accepts exactly the infinite traces of $D$ that are not state-action fair. Moreover, $M_{D,unfair}$ has size and index $1$exp in the size of $D$ (compactly represented).
\end{lemma}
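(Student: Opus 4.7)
The plan is to exploit the observation that the negation of the state-action-fairness \LTL formula $\phi_{D,\mathit{fair}}$ is already in a shape that maps directly onto a Rabin condition: a trace is unfair iff there exists $(s,a,s')\in Tr$ such that $(s,a)$ occurs infinitely often while the concrete triple $(s,a,s')$ occurs only finitely often. This suggests introducing one candidate Rabin pair per transition of $D$. What remains is to design a deterministic state space in which both ``$(s,a)$ just occurred'' and ``$(s,a,s')$ just occurred'' are visible as state-membership predicates, and in which only inputs that are actually traces of $D$ stay outside a rejecting sink.

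I would take the alphabet to be $\Sigma := 2^{\F\cup\A}$, write $s_\ell := \ell\cap\F$ and $a_\ell := \ell\cap\A$ for a letter $\ell\in\Sigma$, and build a two-letter-memory automaton. Concretely, set $Q := \{q_0, q_\mathit{sink}\}\cup (\Sigma\times\Sigma)$ with initial state $q_0$; from $q_0$ on reading $\ell$ go to $(\star,\ell)$ if $s_\ell = s_0$ (with $\star$ a fixed dummy letter) and to $q_\mathit{sink}$ otherwise; from $(\ell_1,\ell_2)$ on reading $\ell_3$ go to $(\ell_2,\ell_3)$ if $(s_{\ell_2},a_{\ell_2},s_{\ell_3})\in Tr$ and to $q_\mathit{sink}$ otherwise; and let $q_\mathit{sink}$ be absorbing. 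After any valid prefix $\tau_0\cdots\tau_i$ with $i\geq 1$, the automaton is in state $(\tau_{i-1},\tau_i)$, while any non-trace input is trapped in $q_\mathit{sink}$.

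For each $(s,a,s')\in Tr$ I would then include the Rabin pair with
\[
I_{s,a,s'} := \{(\ell_1,\ell_2) : \ell_2 = s\cup a\}, \qquad F_{s,a,s'} := \{(\ell_1,\ell_2) : \ell_1 = s\cup a,\; s_{\ell_2} = s'\}.
\]
Visiting $I_{s,a,s'}$ infinitely often is equivalent to the pair $(s,a)$ appearing infinitely often in the input trace, and visiting $F_{s,a,s'}$ infinitely often is equivalent to the transition $(s,a,s')$ actually firing infinitely often, so the Rabin condition holds iff the input is an unfair trace of $D$. Since $q_\mathit{sink}$ lies in no $I$-set, invalid inputs fail the condition vacuously, which gives the required language equality. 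The size and index bounds then follow by inspection: $|Q| = |\Sigma|^2 + 2 = 2^{2(|\F|+|\A|)}+2$ and the number of pairs equals $|Tr| \leq |\Sigma|^3$, both singly exponential in the compactly represented size of $D$.

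The main thing to get right, and the reason a single-letter memory does not suffice, is that Rabin acceptance is state-based whereas ``$(s,a,s')$ was just taken'' is an \emph{edge} property depending on two consecutive letters; pairing the previous and current letter inside each state is the device that promotes this edge information to state-membership, and it does so without inflating the state space beyond a single exponential --- any encoding that instead tried to remember, per state, which of the $|Tr|$ triples was last fired would cost a second exponential.
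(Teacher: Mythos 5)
Your construction is correct and is essentially the construction the paper sketches: the paper's one-line proof says to let the DRW states store the last state-action and last state-action-state and to use Rabin pairs $(\{sa\},\{sas'\})$ for each $(s,a,s')\in Tr$, which is exactly what your two-letter memory $(\ell_1,\ell_2)$ together with the sets $I_{s,a,s'}$ and $F_{s,a,s'}$ implements. Your version is just slightly more explicit about the rejecting sink for non-traces of $D$ and about the size/index accounting, both of which the paper leaves implicit.
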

To see this, let the states of the DRW store the last state-action and last state-action-state of $D$, and the Rabin pairs are of the form $(\{sa\}, \{sas'\})$ for $Tr(s,a,s')$.

\begin{lemma} \label{lem:DRW union}
	Given DRW $M_1,M_2$ one can build a DRW $M$, denoted $M_1 \lor M_2$, that accepts the words accepted by $M_1$ or $M_2$. The size of $M$ is the product of the sizes of the $M_i$s, and the index of $M$ is the sum of the indices of the $M_i$s.
\end{lemma}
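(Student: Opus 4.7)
The approach is the standard product construction for deterministic automata, combined with a ``lifting'' of each component's Rabin condition to the product state space. Concretely, let $M_i = (\Sigma, Q_i, q_0^i, \delta_i, \R_i)$ for $i=1,2$. I would define $M = (\Sigma, Q_1 \times Q_2, (q_0^1,q_0^2), \delta, \R)$ where $\delta((q_1,q_2), a) = (\delta_1(q_1,a), \delta_2(q_2,a))$. Thus on any input $w$, the unique run of $M$ is the componentwise pairing of the unique runs $\rho_1, \rho_2$ of $M_1, M_2$ on $w$.

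For the acceptance condition, I would take
\[
\R \;=\; \bigl\{(I \times Q_2,\, F \times Q_2) : (I,F) \in \R_1\bigr\} \;\cup\; \bigl\{(Q_1 \times I,\, Q_1 \times F) : (I,F) \in \R_2\bigr\}.
\]
The count $|\R| = |\R_1|+|\R_2|$ is immediate, as is the state count $|Q_1|\cdot|Q_2|$.

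The correctness step is the only thing requiring justification, and it follows from a simple observation about projections: since the product run is the pairing of $\rho_1$ and $\rho_2$, for any $S_1 \subseteq Q_1$ the set $S_1 \times Q_2$ is visited infinitely often by the product run iff $S_1$ is visited infinitely often by $\rho_1$ (and symmetrically for $S_2 \subseteq Q_2$). Hence the lifted pair $(I \times Q_2, F \times Q_2)$ is satisfied by the product run iff $(I,F)$ is satisfied by $\rho_1$, and analogously for pairs originating from $\R_2$. Taking the disjunction over all pairs in $\R$ shows that the product run satisfies $\R$ iff $\rho_1$ satisfies $\R_1$ or $\rho_2$ satisfies $\R_2$, i.e.\ iff $w$ is accepted by $M_1$ or by $M_2$.

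There is no substantial obstacle here; this is a textbook construction, and I am including it mainly to fix the exact bookkeeping on size and index that will be invoked later. The only mild subtlety worth flagging explicitly is that Rabin conditions are closed under (finite) union essentially ``for free''—one just concatenates the pair lists after lifting—whereas closure under intersection or complement would require more care. That asymmetry is precisely what makes the Rabin acceptance condition convenient for the later use in combining $M_\psi$ with $M_{D,\mathit{unfair}}$ via a disjunction.
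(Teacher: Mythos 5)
Your construction is exactly the one the paper uses (product of state spaces, componentwise transition function, and each component's Rabin pairs lifted by crossing with the other component's full state set), and your correctness argument via projections of the product run is sound. The proposal is correct and takes essentially the same approach as the paper, just spelled out in slightly more detail.
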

To see this, if $M_i = (\Sigma,Q_i,q_i, \delta_i,\R_i)$ define $M = (\Sigma,Q_1 \times Q_2, (q_1,q_2),\delta',\R')$ where $\delta'((s_1,s_2),\sigma)= (\delta_1(s_1,\sigma), \delta_2(s_2,\sigma))$, and $\R$ consists of all pairs of the form $(Q_1 \times I,Q_1 \times F)$ for $(I,F) \in \R_2$ and all pairs of the form $(I \times Q_2,F \times Q_2)$ for $(I,F) \in \R_1$.

\subsubsection{Rabin Games}
The other use for the Rabin condition is to give winning conditions in games. A \emph{Rabin game} is an \underline{explicitly} represented planning problem whose goal is expressed as a Rabin condition $\R$ over the set $St$ of states.

\begin{theorem} \cite{Buhrke:TACAS96,Emerson:FOCS88} \label{thm:solving rabin games}
There is an algorithm that solves Rabin games in time $O(d!n^dm)$ where $d$ is the number of Rabin pairs, $n$ is the number of states, and $m$ is the number of transitions. In addition, solving Rabin games is \np-complete (in the size of the explicit representation).
\end{theorem}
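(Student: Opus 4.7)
The plan is to split the statement into two parts: the deterministic algorithm with time $O(d! n^d m)$, and the NP-completeness.

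For the algorithmic upper bound, I would use a recursive procedure on the number $d$ of Rabin pairs, in the style of Emerson--Jutla / Zielonka. The base case $d = 0$ is immediate: no trace can satisfy the Rabin condition, so the protagonist wins only from dead-end states. For the inductive step, I would iterate over each Rabin pair $(I_i, F_i)$: first remove the states of $F_i$ from the game, then in the restricted game compute the controlled-attractor of $I_i$, i.e., the set of states from which the protagonist can force a visit into $I_i$ while avoiding $F_i$. Recursing on the complementary subgame with the remaining $d - 1$ pairs, and combining the partial winning regions via standard attractor arguments, yields a recurrence of the form $T(n, d) \le d \cdot T(n, d-1) \cdot O(nm)$, which unrolls to $O(d! \, n^d m)$. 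One also obtains, by the same recursion, a memoryless winning strategy for the protagonist whenever she wins.

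For NP membership, I would invoke the memoryless determinacy of Rabin games for the protagonist (Emerson--Jutla): if any policy wins from $s_0$, then some memoryless $f : St \to Act$ does. Since $|f|$ is polynomial in $|St|$ and $|Act|$, the algorithm guesses $f$ and verifies it in polynomial time. Verification reduces to a one-player problem for the antagonist in the graph $G_f$ obtained by restricting each state $s$ to successors in $Tr(s, f(s))$: one must check that every infinite antagonist path from $s_0$ satisfies the Rabin condition. Using Tarjan's algorithm, compute the SCC decomposition of $G_f$ and verify that for every SCC $C$ in which the antagonist can be trapped (i.e., every bottom SCC reachable from $s_0$), some Rabin pair $(I, F)$ satisfies $C \cap I \ne \emptyset$ and $C \cap F = \emptyset$. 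This is polynomial.

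For NP-hardness, I would reduce from SAT. Given a CNF formula $\varphi$ with variables $x_1, \dots, x_n$ and clauses $c_1, \dots, c_k$, build an explicit Rabin game in which the protagonist first traverses $n$ choice states, one per variable, picking a truth value; then the antagonist picks a clause $c_j$ to challenge; finally, the play enters a gadget that cycles through the literals of $c_j$ and exposes which match the protagonist's assignment. With $n$ Rabin pairs, one per variable, designed so that the Rabin condition is met on the resulting infinite play iff at least one literal of the challenged clause agrees with the assignment, the protagonist has a winning (necessarily memoryless) strategy iff $\varphi$ is satisfiable, and the construction is polynomial.

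The main obstacle is pinning down the precise $O(d! n^d m)$ bound: several variants of the recursive algorithm exist, and one must be careful that each recursive call truly pays only an $O(nm)$ attractor-computation overhead and that the $d$-fold branching over Rabin pairs multiplies out cleanly to $d!$ without hidden logarithmic or polynomial factors. Once memoryless determinacy is taken as given, the NP upper and lower bounds are comparatively routine.
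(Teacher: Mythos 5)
The paper offers no proof of this theorem at all---it is imported as a black box from \cite{Emerson:FOCS88,Buhrke:TACAS96}---so your proposal has to be judged against the standard proofs in those references. Your overall architecture is the right one (recursive pair-elimination for the time bound, memoryless determinacy of the Rabin player plus guess-and-check for \np membership, a SAT reduction for hardness), but two of the steps contain genuine errors. In the \np upper bound, the verification of a guessed memoryless strategy $f$ is wrong: in the one-player graph $G_f$ the antagonist is not ``trapped'' in bottom SCCs---it chooses the path, so the infinity set of a play can be \emph{any} reachable strongly connected subgraph, including a non-bottom SCC, and even a proper strongly connected subset of an SCC that avoids all the $I$-states of the pair that the full SCC happens to satisfy. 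Your check would therefore certify losing strategies. The correct polynomial-time test is the Emerson--Lei-style recursion: check that there is no reachable strongly connected subgraph $C$ (with at least one edge) such that for every pair $(I,F)$ either $C\cap I=\emptyset$ or $C\cap F\neq\emptyset$, by repeatedly deleting states and re-decomposing into SCCs.

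The hardness gadget also fails as described. A Rabin condition depends only on the set of states visited infinitely often, so a truth assignment chosen once in a finite prefix is invisible to the winning condition; and the looping gadget cannot ``expose which literals match the protagonist's assignment'' without remembering the assignment, which would blow the arena up to $2^n$ states. The known reductions are genuinely more delicate: they keep the protagonist asserting literals forever (the antagonist repeatedly names a clause, the protagonist answers with a literal of it) and engineer the pairs so that winning forces consistency in the limit; doing this with only polynomially many Rabin pairs---rather than the exponentially many that a naive ``the asserted literals form a consistent assignment'' encoding requires, since that condition is a conjunction over variables while a Rabin condition is a disjunction over pairs---is exactly the non-trivial content of the Emerson--Jutla argument. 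Finally, a smaller point on the time bound: the recurrence you write, $T(n,d)\le d\cdot T(n,d-1)\cdot O(nm)$, unrolls to $O(d!\,(nm)^d)$, not $O(d!\,n^d m)$; the attractor costs must enter additively inside an outer fixed-point iteration for the stated bound to come out.
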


\subsubsection{Reduction and Algorithm}
We can now describe the algorithm promised in Theorem~\ref{thm:upper bounds}. Given a state-action fair planning problem $\tup{D,\psi}$, reduce it to the problem of solving the Rabin game $G = (Ar,Acc)$ constructed as follows. The arena $Ar$ is defined as the synchronous product of the domain $D$, explicitly represented, and the DRW $M = M_{D,unfair} \lor M_\psi$. The Rabin winning condition $Acc$ is induced by the Rabin acceptance condition $\R$ of $M$, i.e., $Acc$ consists of all pairs of the form $(S \times I, S \times F)$ for $(I,F) \in \R$.

This completes the description of the reduction. To see that it is sound and complete, simply note that a policy $f$ solves the state-action fair planning problem $\tup{D,\psi}$ iff every fair $f$-trace in $D$ is accepted by the DRW $M$ iff every trace (fair and not-fair) in $G$ generated by the strategy that maps $(s_0,q_0)(s_1,q_1) \cdots (s_n,q_n)$ to the action $f(s_0 s_1 \cdots s_n)$ satisfies the Rabin condition $Acc$. The first iff is due to Theorem~\ref{thm:LTL to DRW}, and the second iff follows from the definition of Rabin condition and of the synchronous product. 

For the complexity analysis, simply note that the DRW $M_{D,unfair} \lor M_\psi$ has size $1$exp in the size of $D$ (compactly represented) and $2$exp in $|\psi|$, and index $1$exp in $D$ (compactly represented) and $1$exp in $|\psi|$. Now apply Theorem~\ref{thm:solving rabin games} to get the stated goal, combined, and domain complexities.

\section{Lower bounds for state-action fair planning} \label{sec:lower-bounds}

We showed that state-action fair planning for temporally extended goals has $2$\exptime combined-complexity and goal-complexity, and $1$\nexptime domain complexity. In this section we study lower bounds for the problem and show that we can match the $2$\exptime goal complexity (with a technique that also supplies new proofs of $2$\exptime goal complexity for the cases of no-fairness and stochastic fairness). For domain-complexity, we observe that existing results show the problem is $1$\exptime-hard. This leaves open whether the domain complexity can be lowered from $1$\nexptime to $1$\exptime.

\subsubsection{Domain-complexity} 
It is not hard to establish a $1$\exptime lower-bound for the domain complexity. Indeed, one can reduce the problem of stochastic-fair planning with reachability goals, which is known to be $1$\exptime-complete~\cite{Littman:AAAI97,Rintanen:ICAPS04}. Indeed, introduce a fresh fluent $p$ and fix the goal $\meventually p$. Then, for a stochastic-fair planning problem with domain $D$ and reachability goal $\meventually target$, build a new domain $D_p$ from $D$ by adding the fluent $p$ and a new action with precondition $target$ and postcondition $p$. Then the stochastic-fair problem $\tup{D,\meventually target}$ has a solution iff the stochastic-fair problem $\tup{D_p,\meventually p}$ has a solution. Moreover, the latter holds iff it has a finite state solution. By Proposition~\ref{prop:equivalent for reachability goals}, this is equivalent to the fact that the state-action fair problem $\tup{D_p,\meventually p}$ has a solution.

\subsubsection{Goal complexity}

The contribution of this section is a proof of the following theorem.\footnote{The result is stated in~\cite{GiacomoRubin:IJCAI18} for \LTLf but with an incorrect proof. The error there is concluding that every $f$-trace visits each state at most once. This is true for memoryless strategies, but need not be true for other strategies which might be required when planning for temporally extended goals.} 

\begin{theorem} \label{thm:lower bounds}
The goal complexity (and therefore, also combined complexity) of planning for \LTL/\LTLf goals assuming state-action fairness is $2$\exptime-hard.
\end{theorem}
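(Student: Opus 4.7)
The plan is to reduce from the acceptance problem for alternating Turing machines using $2^n$ space on inputs of size $n$, which is $2$\exptime-complete. I would adapt the classical Pnueli--Rosner template for $2$\exptime-hardness of \LTL synthesis, and design the reduction so that a single construction simultaneously gives the lower bound for no-fairness, stochastic-fairness, and state-action fairness, as promised in the paper's preamble to this theorem.

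First, I would fix a small domain $D$ of constant size that lets the agent output a binary symbol (via action choice) and the environment output a binary symbol (via nondeterministic successor). Concretely, $D$ could have two states distinguished by a single fluent, two actions, and every state-action pair having both states as possible successors. Because $D$ is fixed, any hardness in the size of the formula is a goal-complexity lower bound.

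Second, given an ATM $M$ and input $w$ of size $n$, I would construct an \LTL (or \LTLf) formula $\psi_{M,w}$, of size polynomial in $n+|M|$, whose infinite models are exactly the traces that encode an accepting branch of the computation tree of $M$ on $w$. Standard tiling-style encoding places each configuration of $M$ in a block of $2^n$ consecutive trace positions, with the cell index tracked by $n$ bits; existential transitions of $M$ correspond to agent actions and universal transitions to environment nondeterminism. The forward direction is then routine: an accepting computation tree induces a policy under which every infinite $f$-trace (fair or not) encodes an accepting branch, so the policy wins even against no-fairness adversaries.

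The main obstacle is the backward direction under state-action fairness, which is the setting where the adversary is most restricted (and where the earlier proof in~\cite{GiacomoRubin:IJCAI18} silently assumed that $f$-traces do not revisit states). Because $D$ has constant size, $f$-traces must in general revisit state-actions infinitely often, so I would use $\psi_{M,w}$ itself to carry a logical position counter---via the encoded cell and configuration indices---so that repeated state-action visits correspond to distinct logical positions of the computation tree. State-action fairness then forces that at every logical position reached infinitely often the environment must eventually exhibit every universal successor, which is exactly what is needed to reconstruct an accepting computation tree from a winning policy. The resulting chain of equivalences ``$M$ accepts $w$'' $\Leftrightarrow$ ``a policy wins under no-fairness'' $\Leftrightarrow$ ``a policy wins under stochastic-fairness'' $\Leftrightarrow$ ``a policy wins under state-action fairness'' (the middle two following from the first via the trivial inclusion ``a no-fairness winner is also a fair winner'' and Proposition~\ref{prop:underapprox}) yields the claimed $2$\exptime-hardness uniformly across the three fairness notions.
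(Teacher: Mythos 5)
Your overall architecture (reduction from an alternating \expspace\ machine, a fixed constant-size domain so the hardness lands on the goal, a polynomial-size formula with the usual implication structure excusing the agent when the environment misbehaves) matches the paper's. But the step you flag as ``the main obstacle'' is exactly where your argument breaks. You write that state-action fairness ``forces that at every logical position reached infinitely often the environment must eventually exhibit every universal successor.'' State-action fairness is a \emph{per-trace} property about state-action pairs \emph{of the domain} that recur infinitely often in that trace; a logical position of the computation tree occurs at most once along any single trace, and fairness says nothing about which of several traces (one per universal choice) the environment realizes. This is essentially the same conflation of domain states with logical/automaton states that the paper identifies as the source of the error in \cite{GiacomoRubin:IJCAI18}. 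The correct argument runs in the opposite direction: one shows that if $M$ rejects $x$, then against \emph{any} policy the environment can, within a \emph{finite} prefix, either expose a cheat (via a challenged cell) or drive the simulation to a rejecting configuration; since every finite $f$-trace extends to a state-action fair infinite $f$-trace, and the cylinder of extensions has positive probability, the policy fails under both fairness notions. Fairness is used only through the fact that it cannot exclude any finite prefix --- not as a tool to force the environment's hand.

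Your ``chain of equivalences'' is also not closed. The trivial inclusion plus Proposition~\ref{prop:underapprox} give only the easy implications (no-fairness winner $\Rightarrow$ sa-fair winner $\Rightarrow$ stochastic-fair winner); they never return you to ``$M$ accepts $w$.'' In particular, nothing in your proposal establishes ``stochastic-fair winner $\Rightarrow$ $M$ accepts $w$,'' and this cannot be inherited from the state-action case because stochastic-fair winning is strictly weaker (Propositions~\ref{prop:underapprox} and~\ref{prop:cex}); the finite-prefix argument above is what supplies this missing direction for both fairness notions at once. Finally, a smaller omission: with configurations of length $2^n$, a polynomial-size \LTL formula cannot directly compare corresponding cells of consecutive configurations, so the environment must additionally issue address ``challenges'' (the $K_i$ blocks in the paper) that single out which cell is verified; your sketch assigns the environment only the universal transitions and leaves this mechanism, which is the crux of keeping the formula polynomial, unstated.
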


Inspired by \cite{Courcoubetis:JACM95},
we provide a polynomial-time construction that, given an alternating \expspace\ Turing machine $M$ and an input word $x$, produces a probabilistic domain $D$ (explicitly represented) and an \LTL formula $\Phi$ such that $M$ accepts $x$ iff $\exists f. (D,f) \models A \Phi$. Note that to handle the goal complexity, the domain $D$ will be independent of $M$ and $x$.

\def\bin{\textrm{bin}}
\def\Pos{\textsc{Pos}}
\def\Sym{\textsc{Sym}}
\def\Trn{\textsc{Trn}}

\def\Cell{\texttt{Cell}}

\def\conf{\texttt{conf}}

\def\Init{\texttt{Init}}
\def\Acc{\texttt{Acc}}
\def\Next{\texttt{Next}}
\def\block{\texttt{block}}

\emph{Notation.} An alternating Turing machine is a tuple $(Q,\Sigma,\Delta,q_0,q_a,q_r)$ where $Q$ is the set of states partitioned into $Q_\exists$ and $Q_\forall$ (called the existential and universal modes), $\Sigma$ is the tape-alphabet, $\Delta \subseteq (\Sigma \times Q)^2 \times \{L,R,N\}$ is the transition relation, and $q_0 \in Q$ is the initial state, $q_a,q_r \in Q$ are the accepting and rejecting states. A \emph{configuration} is a string matching the expression $\Sigma^* \cdot (\Sigma \times Q) \cdot \Sigma^*$; it is \emph{initial} (resp. \emph{accepting}, \emph{rejecting}) if the state is $q_0$ (resp. $q_a,q_r$).
A computation of $M$ is a sequence of configurations, starting in an initial configuration, respecting the transition relation, and ending in an accepting or rejecting state. Wlog, we assume that the existential and universal modes of $M$ strictly alternate, with the existential going first.

Say $M$ runs in space $2^{p(|x|)}$ for some polynomial $p(\cdot)$. In particular, a configuration of $M$ running on $x$ has length at most $2^{p(|x|)}$. Let $n := p(|x|)$. Intuitively, the domain $D$ ensures that the agent and the environment generate strings of the form
\begin{align*}
 C_0 \cdot & (\# \cdot T_1 \cdot \#' \cdot C_1 \cdot \#'' \cdot K_1) \cdot (\# \cdot T_2 \cdot \#' \cdot C_2 \cdot \#'' \cdot K_2)  \\
 & \cdots (\# \cdot T_j \cdot \#' \cdot C_j \cdot \#'' \cdot K_j) \cdot \# \cdot \bot \cdot \bot \cdot \bot \cdots
\end{align*}
where the $C_i$s are arbitrary strings over $\{0,1,\%,\$\}$, the $T_i$s and $K_i$s are arbitrary strings over $\{0,1\}$, and $\bot$ is a special symbol. Intuitively, the $C_i$s will encode configurations of $M$ and are generated by the agent, the $T_i$s will encode transitions of $M$ and are generated by the agent for odd $i$ and the environment for even $i$, and the $K_i$s are generated by the environment and encode a position/index $k \in [1,2^n]$ on the tape that the environment wants to check. Finally, $\bot$ holds in a sink of the domain that the agent can go to when it is done. Note that this allows the agent to never go to the sink, but such traces will be rejected by the goal formula. We define the \LTLf goal $\Phi := \Phi_{Env} \limp \Phi_{Ag}$. Intuitively, $\Phi$ will enforce that as long as the environment encodes its parts correctly (i.e., $\Phi_{Env}$ holds), then so does the agent, and the accepting state is reached (i.e., $\Phi_{Ag}$ holds). The formula $\Phi_{Ag} := \Phi_{conf} \land \Phi^{odd}_{tran} \land \Phi_{chal} \land \Phi_{acc}$, and $\Phi_{Env} := \Phi_{num} \land \Phi^{even}_{tran}$, where $\Phi_{conf}$ says that each $C_i$ encodes a configuration, with $C_0$ encoding the initial configuration; $\Phi^{odd}_{tran}$ (resp. $\Phi^{even}_{tran}$) says that each $T_i$ with $i$ odd (resp. even) encodes a transition of $M$; $\Phi_{num}$ says that each $K_i$ encodes a number in $[1,2^n]$; and $\Phi_{acc}$ says that an accepting configuration is reached; $\Phi_{chal}$ (think of it as a ``challenge'') is used to check that the $k$th letter in the configuration encoded by $C_i$ is the result of applying transition $T_i$ to the configuration $C_{i-1}$. Intuitively, since the environment is adversarial, all possible positions will be challenged and all universal transitions will be taken. Thus, the agent will be able to enforce the goal iff $M$ accepts $x$. We now provide details on how to write the subformulas of $\Phi$.

Choose a sufficiently large integer $m$ to encode all members of $Q$ and $\Sigma \times Q$ as a binary string of length exactly $m$.
Let $\Sym$ denote a set of binary strings of length $m$ that encode either a tape-letter $l$ or a tape-letter/state pair $(l,q)$. Let $\bin(i)$ denote the binary string of length $n$ whose numeric value is $i$. The possible configurations of $M$ are encoded by the strings of the form
$s(\% \cdot \bin(0) \cdot \$ \cdot \Sym) \cdot (\% \cdot \bin(1) \cdot \$ \cdot \Sym) \cdots (\% \cdot \bin(2^n-1) \cdot \$ \cdot \Sym)$ which have exactly one symbol encoding a tape-letter/state pair $(l,q)$. The reason for the $\bin(i)$s is they allow the formula to check if an encoding of one configuration can be reached in one step of $M$ from an encoding of another. We call the substring $(\% \cdot \bin(i) \cdot \$ \cdot w)$ the $i$th block, where $i$ is the \emph{block number} and $w$ is the \emph{block symbol}. One can write an \LTLf formula $\conf$, of size linear in $n$ and the size of $M$, that enforces this structure. Indeed, using a standard encoding of the binary counter on $n$-bit strings, the formula says that exactly one symbol encodes a tape-letter/state pair, and all the other symbols encode just tape-letters. It also says that $\bin(0) = 0^n$, $\bin(2^n-1) = 1^n$, and for every $j \leq n$, the $j$th bit in a block is flipped in the next block iff all bits strictly lower in this block are $1$s. Thus, the formula $\Phi_{conf}$ can be defined as
$\texttt{init} \land \malways (\#' \limp \mnext \conf)$ where $\texttt{init}$ is a formula that encodes the initial configuration (which can be hard-coded by a polynomial sized formula by explicitly specifying the first $|x|$ blocks, and that the rest of the blocks in the configuration contain the encoding of the blank tape symbol). Writing linearly-sized \LTLf formulas
$\Phi^{odd}_{tran}, \Phi^{even}_{tran}$, $\Phi_{num}$, and $\Phi_{acc}$ poses no particular problem.

It remains to show how to build the formula $\Phi_{chal}$. It will be the conjunction of two formulas $\Phi^1_{chal}$ and $\Phi^2_{chal}$. The first handles the first challenge, and the second handles all the rest. We now show how to build the second (the first is similar). Define $\Phi^2_{chal}$ as:
\[
 \malways \bigwedge \left[(\texttt{cha} \land \texttt{cur}_{x} \land \texttt{nx}_y \land \texttt{nxnx}_z \land \texttt{tr}_t) \limp \texttt{img}_{y'}\right]
\]
where the conjunction is over tuples $(x,y,z,t,y')$ such that 
applying the transition $t$ to the triple of tape-contents $xyz$ (including a possible state) results in the tape content $y'$ of the middle cell (e.g., for $t = (q,l,q',l',R)$, if $x = (l,q)$ then $y' = (y,q')$, if $x = l$ then $y' = y$, etc.).
Intuitively, $\texttt{cha}$ expresses that we are currently at the start of a block of a configuration, say $C_i$, whose number is one less than the challenge number encoded by $K_{i+1}$; the formula $\texttt{cur}_{x}$ expresses that the symbol in the current block is $x$; the formula $\texttt{nx}_y$ expresses that the symbol in the next block is $y$; the formula $\texttt{nxnx}_z$ expresses that the symbol in the block after that is $z$; the formula $\texttt{tr}_t$ says that $T_{i+1}$ encodes the transition $t$; and the formula $\texttt{img}_{y'}$ says that the block whose number is encoded by $K_{i+1}$ in the configuration $C_{i+1}$ is $y'$.

\newcommand{\mscan}{{\mathop{\J}}}

We use the following shorthand, that can scan the string for patterns: define $\phi_1 \mscan^1 \phi_2 := (\lnot \phi_1) \muntil (\phi_1 \land \mnext \phi_2)$ and
$\phi_1 \mscan^{2} \phi_2 := \phi_1 \mscan^1 (\phi_1 \mscan^1 \phi_2)$. Intuitively, $\phi_1 \mscan^i \phi_2$ means $\phi_2$ holds one step after the $i$th occurrence of $\phi_1$.

Formally, define $\texttt{cha}$ as:
\[
\% \land \left[(\%) \mscan^2 \left(\bigwedge_{i \in [0, n)} \bigwedge_{b \in \{0,1\}} \left[\mnext^i b \iff \#'' {\mscan}^2 \mnext^i b\right] \right)\right].
\]
Define $\texttt{cur}_{x}$ as:
$
(\$) \mscan^1 (\wedge_{i: 0 \leq i < m} \mnext^i b_i)
$
where $b_1 b_2 \cdots b_m$ encodes the symbol $x$, and define $\texttt{nx}_y$ and $\texttt{nxnx}_z$ similarly.
Define $\texttt{tr}_t$ as:
$
(\#) \mscan{^1} (\wedge_{i: 0 \leq i < m} \mnext^i t_i)
$
where $t_1 t_2 \cdots t_m$ encodes the symbol $t$.
Define
$\texttt{img}_{y'}$ as:
\[
\left[\#'\right] \mscan{^1}  \left[ (\texttt{match} \limp \mnext^n \wedge_{i: 0 \leq i < m} \mnext^i y'_i) \muntil {\#''}  \right]
\]
where $y'_1 y'_2 \cdots y'_m$ encodes the symbol $y'$, and
$\texttt{match}$ is
\[
\bigwedge_{i \in [0,n)} \bigwedge_{b \in \{0,1\}} (\mnext^i b \iff (\#'') \mscan{^1} (\mnext^i b)).
\]
Intuitively, it says that in the next configuration, if a block number equals the challenge number, then the block symbol should be $y'$. This completes the construction of the goal $\Phi$.
This completes the proof for the case of no fairness. 

For stochastic and state-action fairness, observe that a) if $M$ accepts $x$ then, already with no fairness assumptions, there is a solution, and b) if $M$ rejects $x$, then for every policy $f$, the environment can, within a finite number of steps, prevent any hope of satisfying the goal: either by exposing that the agent is cheating in the simulation, or by reaching a rejecting configuration. Since every finite $f$-trace can be extended to a fair infinite $f$-trace, the policy $f$ is not a solution to the state-action fair planning problem, nor is it a solution to the stochastic fair planning problem since the set of infinite $f$-traces that extend this finite $f$-trace has positive probability.

\section{Related Work and Discussion} \label{sec:discussion}
We have discussed how the distinction between stochastic- and state-action fairness is so-far missing from the planning/AI literature. On the other hand, as we now discuss, this distinction is present in the verification literature. 

\subsubsection{Related work in verification} 
Early work in verification was motivated by the problem of providing formal methods (such as proof-systems or model-checking algorithms) to reason about probabilistic concurrent systems. As such, some effort was made to abstract probabilities and capture stochastic fairness by language-theoretic properties. In fact, sophisticated forms of language-theoretic fairness were introduced to do this~\cite{Pnueli:IC93,Baier:IPL98}, since simple language-theoretic notions (similar to state-action fairness) were known not to capture stochastic fairness~\cite{Pnueli:STOC83}. 

A comprehensive study of fairness in reactive systems is provided in~\cite{Volzer:JACM12} where fairness is characterized language-theoretically, game-theoretically, topologically, and probabilistically. Fairness is used in verification of concurrent systems in order to prove liveness properties, i.e., that something good will eventually happen. The limitations of fairness for proving liveness properties, as well as ways to overcome these limitations, are analysed in~\cite{Glabbeek:ACMCS19}.

The verification literature on probabilistic concurrent programs typically considers policies as schedulers. In particular, the central decision problem there is different to the planning problem: it asks whether \emph{every} (rather than \emph{some}) policy $f$ almost-surely enforces the temporally-extended goal~\cite{Vardi:FOCS85,Pnueli:IC93,Bianco:FSTTCS95,Courcoubetis:JACM95}. Just as we used Rabin conditions to capture state-action unfair traces, one can use the dual Street condition to capture stochastically-fair traces~\cite{Vardi:FOCS85}. 

Generalizations of strong-cyclic solutions, other than those that use state-action fairness, have been studied using automata-theory. For instance,~\cite{Pistore:JAIR07} consider that $f$ is a solution if every finite $f$-trace can be extended to an infinite $f$-trace satisfying the given \LTL formula. However, such a notion is different from a solution assuming state-action fairness (the example in Proposition~\ref{prop:cex} shows this). Also,~\cite{Vardi:CAV95} studies a variation of \LTL synthesis assuming a fair scheduler, where the transitions over a given set of states are assumed to be implicitly encoded in \LTL.

\subsubsection{Discussion}

While stochastic fairness admits well-behaved algorithms, it is not clear that language-theoretic fairness does. For the moment, even for the case of \LTLf goals, our algorithm (Section~\ref{sec:algorithms-and-upper-bounds}) requires automata over infinite traces to deal with state-action fairness, which itself is a property of infinite traces. Unfortunately, algorithms for automata over infinite traces are not as easy to implement as for finite traces~\cite{DFogartyKVW13}. 
 Nonetheless, we hope that our new algorithm, which suggests the importance of planning for Rabin goals, spurs the planning community to devise translations and heuristics for solving these.

\clearpage
\bibliographystyle{aaai}
\bibliography{references}

\end{document}